\newcounter{dummy} \numberwithin{dummy}{section}
\newtheorem{lemma}[dummy]{Lemma}
\theoremstyle{remark}
\newtheorem{remark}[dummy]{Remark}
\newtheorem{example}[dummy]{Example}
\renewcommand{\Re}{\operatorname{Re}}
\renewcommand{\Im}{\operatorname{Im}}
\newcommand{\calW}{\mathcal{W}}
\newcommand{\calX}{\mathcal{X}}
\newcommand{\calY}{\mathcal{Y}}
\newcommand{\calZ}{\mathcal{Z}}
\newcommand{\scrF}{\mathscr{F}}
\newcommand{\scrS}{\mathscr{S}}
\newcommand{\R}{\mathbb{R}}
\let\Re\relax
\DeclareMathOperator{\Re}{Re}
\let\Im\relax
\DeclareMathOperator{\Im}{Im}
\newcommand{\bxi}{\boldsymbol \xi}
\DeclareMathOperator{\diag}{diag}
\DeclareMathOperator{\spn}{span}
\DeclareMathOperator{\Ad}{Ad}
\DeclareMathOperator{\SO}{SO}
\DeclareMathOperator{\so}{\mathfrak{so}}
\numberwithin{equation}{section}
\begin{document}

\title{SO(3)-Equivariant Neural Networks for Learning from Scalar and Vector Fields on Spheres}

\author[1]{Francesco Ballerin}
\author[2]{Nello Blaser}
\author[1]{Erlend Grong}
\affil[1]{Department of Mathematics, University of Bergen}
\affil[2]{Department of Informatics, University of Bergen}
\date{}

\maketitle

\begin{abstract}
    Analyzing scalar and vector fields on the sphere, such as temperature or wind speed and direction on Earth, is a difficult task. Models should respect both the rotational symmetries of the sphere and the inherent symmetries of the vector fields. A class of equivariant models has emerged, which process these spherical signals by applying group convolutions in Fourier space with respect to the three-dimensional rotation group. However, the proposed models are constrained in the choice of convolution kernels and nonlinearities in order to preserve the desired signal properties.  In this paper, we introduce a deep learning architecture without these limitations, thus with a richer class of convolution kernels and activation functions. This architecture is suitable for signals consisting of both scalar and vector fields on the sphere, as they can be described as equivariant signals on the three-dimensional rotation group. Experiments show that this architecture generally outperforms standard CNNs and often matches or exceeds the performance of spherical CNNs trained under comparable conditions. However, the advantage over sCNNs is not uniform across all tasks and we observe that incorporating the interaction between different spins in the hidden layers narrows this gap.

\end{abstract}

\section{Introduction}

% General introduction
It is well known that one of the reasons behind CNNs' effectiveness is the use of convolutional layers that are translation-equivariant, allowing weight sharing in the form of convolution kernels. In developing a neural network for signals on non-linear spaces such as the sphere, convolution should follow a similar aim, adapting to the symmetries of our space of interest. Following the geometric blueprint outlined in %the Geometric Deep Learning Book
\cite{bronstein2021geometric}, we have seen several examples of convolutional neural nets adapted to the sphere, such as \cite{cohen2018spherical} and \cite{esteves2018learning} who independently developed spherical CNNs for rotation-invariant classification of scalar functions on the sphere. These are both based on replacing the usual Euclidean convolution layers with layers employing group convolution on the group $\SO(3)$ of 3-dimensional rotation matrices, which is the sphere's natural symmetry group. Furthermore, \cite{esteves2020spin} points out how both scalar fields and vector fields on the sphere can be represented in terms of spin-weighted spherical harmonics, that is, as equivariant signals on the group $\SO(3)$, something which was also described in more generality and for general groups and tensors in \cite{cohen2019general} and \cite{cohen2021equivariant}. Recently, computationally efficient implementations of spherical CNNs that scale to more complex datasets, such as weather and molecular datasets, have been presented by \cite{esteves2023scaling}. However, all current work relies on architectural constraints to preserve rotational symmetry, including restrictions on filter types and activation functions.

Learning scalar and vector fields on the sphere while preserving rotational symmetry, but without hard constraints that potentially limit expressive power remains unexplored. In this paper, we show how to construct such a neural network without the hard constraints of sCNNs.
We present a group-convolution layer, which enforces the required symmetries only after a full non-restricted group-convolution on $\SO(3)$. Not only does this permit the use of a richer class of filters, but also enables the possibility of using general activation functions, dropout layers, pooling layers, and residual blocks without explicitly enforcing hard constraints. 
As data, we both explore the spherical MNIST dataset, as introduced in \cite{cohen2018spherical}, and the ERA5 dataset (see Section~\ref{subsec:ERA5}). 

We remark that the current state-of-the-art models for weather prediction, such as GraphCast~\cite{lam2023learning} and GenCast~\cite{price2023gencast}, are already models that take the geometry of the earth into account, through respectively a graph neural network on the sphere in the first case, and by using spherical normal distributions in the second case.

\subsection{Summarized contributions}
Our contributions can be summarized as follows:
\begin{enumerate}[$\bullet$]
    \item We develop the mathematical framework for $\SO(3)$-equivariant neural networks using the language of representation theory, unifying the treatment of scalar fields, vector fields, and higher-order tensors under a single formalism;
    \item We investigate general $\SO(3)$-equivariant neural networks, without restrictions on the activation functions, and comparing them to spin-weighted spherical CNNs;
    \item We study empirically how different modules of a spherical CNN contribute to the expressiveness and overall performance across various tasks;
    \item We demonstrate that general $\SO(3)$-equivariant neural networks with full convolutions outperform spherical CNNs unless spins are mixed in the activation function. Moreover, we showcase that interaction between different spins in the hidden layers are instrumental for good performance, even when neither input nor output belong to classes of signals spanned by these spins.
\end{enumerate}

\subsection{Sections of the paper}

In Section~\ref{sec:general_theory} we briefly introduce the required preliminaries on differential geometry, Fourier analysis, and $\SO(3)$-equivariance, which is expanded for the reader seeking more detailed explanations in the Appendices. In Section~\ref{sec:network-architecture} we present the architecture of a general $\SO(3)$-equivariant neural network, and all its components, and discuss how a general $\SO(3)$-equivariant neural network differs from a spherical CNN. In Section~\ref{sec:experiments} we report experimental results we obtained comparing CNNs, spherical CNNs, and general $\SO(3)$-equivariant neural networks. Finally, in Section~\ref{sec:conclusions} we provide conclusions and possible future research directions.

\subsection{Related work}

There is a rich history of contributions related to equivariant neural networks on the plane, spheres, and other geometries. We aim to provide here a brief list of landmark works and their relation to our contribution.

%Planar group‑equivariant CNNs
Group Equivariant Neural Networks (G-CNNs) extend standard CNNs by encoding equivariance to discrete groups, such as rotations and reflections, directly in the architecture \cite{CohenWelling2016}. Instead of convolving only over spatial shifts, G-CNNs define convolution on a symmetry group $G$. Filters are replicated and transformed according to $G$ so that a group action on the input maps to a predictable group action on the output. This introduces a strong inductive bias, improving generalization for tasks with inherent symmetries such as image recognition.

%Steerable networks
Steerable CNNs, introduced in \cite{Cohen2016Dec}, generalize convolutional layers to guarantee equivariance under continuous groups by expressing kernels in steerable bases derived from representation theory. Building on this foundation, $\mathrm{E}(2)$-Steerable CNNs specialize the approach to the Euclidean group in 2D, ensuring exact equivariance to translations, rotations, and reflections \cite{Weiler2019}. Instead of replicating rotated filters as in G-CNNs, in this case kernels are parameterized using harmonic bases, enabling continuous rotation and efficient parameter sharing. Features are treated as fields transforming under irreducible representations, and equivariant convolutions become the most general linear maps between these field types.

%Canonical‑coordinate methods
An alternative to constraining kernels is to canonicalize the input through a differentiable change of variables prior to applying a standard CNN. The Polar Transformer Network (PTN) introduced by \cite{esteves2018polar} follows precisely this strategy: it predicts a polar origin and performs a differentiable polar transform so that rotations and scalings become translations in the canonical domain, yielding translation, rotation, and scale equivariance in a single pipeline. While PTN targets planar data, it is conceptually relevant as it trades strict group‑convolution constraints for learned canonicalization.

%SE(3) steerable/tensor‑field networks
In Euclidean 3D, Steerable CNNs construct kernel bases that guarantee $\mathrm{SE}(3)$ equivariance \cite{Weiler2018}, showing that equivariant convolutions are the most general linear maps between scalar, vector, and tensor fields. Tensor Field Networks \cite{thomas2018tensor} similarly leverage spherical harmonics to process point clouds with explicit tensor-valued features and guaranteed rotation/translation equivariance. 

%Gauge‑equivariant CNNs on manifolds
Gauge-equivariant CNNs generalize group-equivariant architectures to curved spaces by introducing local frames (gauges) that define how features transform under changes of orientation on a manifold. Instead of relying on global symmetries, these networks achieve equivariance to local rotations by modeling feature maps as fields and kernels as gauge-steerable filters. This ensures that convolution respects the manifold’s geometry while maintaining consistency under gauge transformations. Such methods enable learning on non-Euclidean domains like spheres or general surfaces, where global coordinates are unavailable. In \cite{pmlr-v97-cohen19d}, an efficient implementation is proposed by modeling local charts as faces of an icosahedron.

%Quaternion Neural Networks
Quaternion-based rotation-equivariant networks have also been proposed for 3D point cloud processing. REQNNs introduced in \cite{Shen2019Nov} enforce SO(3)-equivariance via quaternion conjugation and tailored quaternion-valued adaptations of common neural-network operations. Their approach is specific to Euclidean point clouds and does not rely on representation theory, whereas our framework operates directly on signals on the sphere.

%Lie‑group kernel parameterizations
Lie-group parameterized CNNs introduce kernels defined directly on continuous transformation groups, such as rotations and scalings, by leveraging the structure of Lie groups. Instead of discretizing transformations or replicating filters, these methods model convolution as integration over the group and parameterize kernels using exponential coordinates of the Lie algebra. This enables efficient equivariance to continuous transformations while maintaining flexibility in kernel design. Bekkers and colleagues \cite{bekkers2020bspline} showed that such parameterizations can capture complex geometric patterns with fewer parameters and strong inductive bias, making them effective for tasks involving scale and rotation variability.

%SO(3) methods
For spherical data, as in the case of omnidirectional cameras and weather modeling, independent works by \cite{cohen2018spherical} and \cite{esteves2018learning} introduced spherical CNNs for rotation-invariant classification of scalar functions on the sphere. These are both based on replacing the usual Euclidean convolution layers with layers that employ group convolution on the group $\SO(3)$ of 3-dimensional rotation matrices, which is the sphere's natural symmetry group. Furthermore, \cite{esteves2020spin} points out how vector fields on the sphere can be represented as equivariant signals on the group $\SO(3)$ of rotations, something which was also described in more generality and for general groups and tensors in \cite{cohen2019general} and \cite{cohen2021equivariant}. Recently, computationally efficient implementations of spherical CNNs that scale to more complex datasets have been presented \cite{esteves2023scaling}. Desirable structural properties of equivariant architectures and their extension to compact groups were also formalized in \cite{kondor2018generalization}.

Our contribution falls in this last category of networks and extends spherical CNNs by taking convolutions with a general $\SO(3)$ filter, which in turn allows for more expressive activation functions to be used.

Ultimately, recent spherical-modeling works relevant to this paper include an empirical study of equivariance versus data augmentation \cite{gerken2022equivariance}, image modeling on spherical grids with HEAL-SWIN \cite{carlsson2024healswin}, equal-area weather forecasting on the sphere \cite{linander2025pear}, and a broad survey of geometric deep learning and equivariant neural networks \cite{gerken2023gdlenn}.

\section{Preliminaries: learning scalar and vector fields on the sphere}\label{sec:general_theory}

When introducing a neural network model for a given task, the input data are drawn from a specific domain, and it is desirable for the model to preserve the \emph{symmetries} inherent to the underlying problem. Such symmetries may include, for instance, permutations of the inputs, as well as rotations and translations, and can be formalized as a mathematical structure known as \emph{a group}, characterized by intrinsic properties such as closure under composition, whereby the combination of two symmetries yields another symmetry. A model is said to be \emph{invariant} if its output remains unchanged under the application of a symmetry to the input. In contrast, a model is \emph{equivariant} if applying a symmetry to the input induces a corresponding structured transformation of the output according to a specified rule.

In this work, we consider the two-dimensional sphere $S^2$, whose symmetries consist of rotations around arbitrary axes. These symmetries can be formalized in the group $\SO(3)$, whose elements can be uniquely represented as $3\times 3$ orthogonal matrix with determinant equal to $1$. The action of these rotations on signals and vector fields defined on the sphere is realized by rotating their inputs in space. A model with $S^2$ as domain then takes signals (scalar and/or vector fields on $S^2$) and, depending on the requirements of the problem, should be designed to satisfy either invariance or equivariance properties.

\subsection{Differential tensors on the sphere and associated functions SO(3)}\label{sec:equiv-tensors}

We view the sphere $S^2$ as the set of unit vectors in $\mathbb{R}^3$. A general signal on the sphere can take different forms, such as scalar functions (e.g., temperature at each point) or vector fields (e.g., wind directions). More generally, these objects can be described within a unified mathematical framework as tensor fields. While familiarity with this formalism is not strictly necessary for understanding the applications considered in this work, we include a brief introduction for the mathematically inclined reader.

We can describe a general signal on the sphere $S^2$ as a \emph{differential tensor}, i.e., an $\binom{i}{j}$-tensor, as a section of the bundle $\Gamma(T^*(S^2)^{\otimes i} \otimes T(S^2)^{\otimes j})$. For example, scalar functions on $S^2$ are $\binom{0}{0}$-tensors, vector fields are $\binom{0}{1}$-tensors, and Riemannian metrics are $\binom{2}{0}$-tensors. Unfortunately, the hairy ball theorem \cite{milnor1978analytic} prevents us from constructing a global basis for the tangent bundle $TS^2$ and therefore we cannot represent these tensors globally as functions with values in a vector space. Instead, we will take advantage of both the structure of $S^2$ as a homogeneous reductive space and the structure of the matrix group $\SO(3)$ to work with an equivalent problem. Any differential tensor on $S^2$ can in fact be considered as an equivariant function on $\SO(3)$ which we call \emph{the associated function}, where the form of the equivariance property depends on the type of differential tensor under consideration. This means that working on a specific class of equivariant functions on $\SO(3)$ is equivalent to working with differential tensors on $S^2$.

\begin{example}\label{ex:equivariance_function} If $Z(\theta)$ is the matrix corresponding to a positive rotation by an angle $\theta$ around the $z$-axis then real functions $f:S^2 \to \mathbb{R}$ are in unique correspondence with functions $\mathbf{f}:\SO(3)\to\R$ satisfying $\mathbf{f}(AZ(\theta)) = \mathbf{f}(A)$, while vector fields $\xi:S^2\to TS^2$  are uniquely represented as functions $\bxi: \SO(3) \to \mathbb{C}$ satisfying the equivariance condition $\bxi(A \cdot Z(\theta)) = e^{-i\theta} \bxi(A)$, $A \in \SO(3)$. For a third example, we mention that any $\binom{l}{0}$-symmetric tensor $\Sigma: S^2\to(T^*(S^2)^{\otimes l})$ is in unique correspondence with functions ${\boldsymbol \Sigma}:\SO(3) \to \mathbb{V}_l$ taking values in the space of complex polynomials\footnote{not necessarily holomorphic} $\mathbb{V}_l$ of degree $l$, which satisfy ${\boldsymbol \Sigma}(A \cdot Z(-\theta))(z) = {\boldsymbol \Sigma}(A)(e^{-i\theta} z)$ for any $A \in \SO(3)$, $z \in \mathbb{C}$, $\theta \in \mathbb{R}$. We refer the reader to Appendix~\ref{app:Equivariant} for more details.
\end{example}

\subsection{Fields on  \texorpdfstring{$S^2$}{S2}, signal \texorpdfstring{$n$}{n}-equivariance, and domain-equivariance}\label{sec:signal-model-equiv}

In the previous section we have briefly introduced the notion of associated functions on $\SO(3)$, which satisfy a specific equivariance condition. We refer to a signal $x:\SO(3)\to\mathbb{C}$ satisfying the condition $x(A\cdot Z(\theta)) = e^{-in\theta}x(A)$ as being \textit{$n$-equivariant}. We remark that the notion of associated functions and signal-equivariance has also been referred to as \textit{spin-weighted spherical functions} in other works such as \cite{esteves2018learning, esteves2020spin, esteves2023scaling}, and that signals with spin~$n$ are conceptually the same as $n$-equivariant signals. A short introduction to the topic of spherical harmonics, spin-weighted spherical harmonics and the connection to $n$-equivariant functions is provided in Appendix~\ref{app:spin-weighted-relation}.

 The notion of signal-equivariance is not to be confused with the equivariance to domain transformations (\emph{domain-equivariance}) that we wish a predictive model to satisfy. While the first one is a property of the signal itself, the latter specifies how a predictive model should react to a change of coordinates on the domain. For example, classic CNNs are translation-equivariant, meaning that a translation of the input signal results in a corresponding translation of the output signal.
 
In the rest of this section, we will introduce the mathematical formalism to describe both signal-equivariance and domain-equivariance in the context of signals on the sphere, and we will explain how these two notions are related to each other in terms of $\SO(3)$-group actions on the space of signals.

Let $\SO(3)$ be endowed with unit Haar measure~$\mu$, which is the unique bi-invariant measure with $\mu(\SO(3))=1$. Denote by $Z(\theta)$ the matrix corresponding to a rotation around the $z$-axis by an angle $\theta$. Consider the space $\calX =L^2(\SO(3), \mathbb{C})$ of complex-valued square-integrable functions with respect to the inner product $\langle x, y \rangle_{L^2} = \int_{\SO(3)} x \bar{y} d\mu$ induced by the Haar measure. We define a left- and a right-action of $\SO(3)$ on the space $\calX$ by respectively $(\ell_Bx)(A) =x(B^{-1} A)$ and $(r_B x)(A) = x(AB^{-1})$ for $A,B \in \SO(3)$. We can then rewrite the $n$-equivariance condition for a signal $x\in\calX$ as a right-action of $\SO(3)$, as $r_{Z(\theta)} x = e^{in\theta} x $. We can also define subspaces $\calX_{n} \subseteq \calX$ of $n-$equivariant functions, i.e., satisfying equivariance through right-action $r_{Z(\theta)} x =e^{in\theta} x$, which can be shown to be orthogonal subspaces with respect to the $L^2$-inner product aforementioned. In light of Example~\ref{ex:equivariance_function} and Appendix A we can also see that the subspaces $\Re \calX_0$ and $\calX_1$ uniquely correspond to real scalar functions and real vector fields respectively.

Denote by $F_\psi:\calY \to \calZ$ a model (in our case a neural network) with weights $\psi$ which maps signals in $\calY$ to signals in $\calZ$, with $\calY$ and $\calZ$ being subspaces of $\calX$. Under this convention, a model mapping scalar fields to scalar fields can be written as $F_\psi:\Re \calX_0 \to \Re\calX_0$, a model mapping real vector fields to real vector fields can be written as $F_\psi:\calX_1 \to \calX_1$ and a model mapping real scalar fields to real vector fields can be written as $F_\psi:\Re \calX_0 \to \calX_1$.

We would like our model $F_{\psi}:~\calY~\to~\calZ$ not to depend on how we embed\footnote{isometrically} the sphere $S^2$ into euclidean space. In other words, we desire our model to be equivariant to domain transformations. This, in turn, guarantees that a choice of the north pole and zero meridian line does not impact the predictive ability of the model. We can write this independence in terms of the left-action $\ell_A F_\psi(x) = F_\psi(\ell_Ax)$ for any $A\in\SO(3)$.

In summary, the desirable equivariance properties that we wish a neural network on the sphere to satisfy are the following two:
\begin{enumerate}[\rm (i)]
\item for given $p,q\in\mathbb{N}$, a model should map $p$-equivariant signals to $q$-equivariant signals: 
\[r_{Z(\theta)} x = e^{ip\theta} x,\; x\in\calX_p,\; \theta \in \mathbb{R},\]
\[r_{Z(\theta)} F_\psi(x) = e^{iq\theta} F_\psi(x),\; x\in\calX_q,\; \theta \in \mathbb{R};\]
\item the model should not depend on the embedding of $S^2$: 
\[\ell_A F_\psi(x) = F_\psi(\ell_A x),\; x \in \calX,\; A \in \SO(3).\]
\end{enumerate}

A natural way to satisfy both equivariant conditions is by stacking building blocks (group-convolutions, post-composition with an activation function, pooling, ...) that satisfy themselves both conditions, as compositions of equivariant functions are equivariant.

\subsection{SO(3) group-convolutions}\label{subsec:group-convolution}
In order to construct equivariant linear layers, we need to introduce convolutions on the domain $\SO(3)$.
We define the (left) convolution of a signal $x$ with a filter $\psi$ on $\SO(3)$ as
\[x *_\ell \psi(A) = \langle x, \ell_A \psi \rangle_{L^2}, \qquad x, \psi \in \calX.\]
The convolution defined in this way satisfies domain-equivariance, i.e., for $L_\psi(x) = x *_\ell \psi$ it holds that $L_\psi(\ell_A x) = \ell_A L(x)$. Furthermore, these are the only such equivariant linear maps from ``The convolution is all you need'' Theorem, as described in detail in the Appendix, Remark~\ref{re:ConvIsAllYouNeed}.

Performing such operations directly in the spatial domain is not only inefficient and slow, but also adds a source of equivariant error, as $S^2$ does not admit a uniform grid. The most common choices, either an equirectangular grid or an HEALPix grid \cite{Gorski2005Apr}, result in a convolution being only approximately equivariant. Moreover, describing convolutions in the spectral domain, by making use of the Fourier transform on $\SO(3)$, allow us to exploit the convolution theorem to rewrite a triple integral as a contraction over indices.

By combining the Peter-Weyl theorem, see Theorem~IV.4.20 in~\cite{knapp1996lie} with representation theory of~$\SO(3)$, e.g., \cite{risbo1996fourier}, we can use the coefficients of the irreducible representations known as the Wigner D-matrices $D^l$, $l=0,1,2,\dots$ as a basis for $\calX = L^2(\SO(3), \mathbb{C})$. For each $l = 0,1,2,\dots$, the coefficient functions $D^l_{m,n}$ of $D^l~=~(D_{m,n}^l):\SO(3)~\to~U(2l+1)$, with $m,n$ indexed from~$-l$ to~$l$, satisfy the orthogonality condition
$$\langle D_{m,n}^l, D_{b,c}^a \rangle_{L^2} = \frac{1}{2l+1} \delta_{l,a} \delta_{m,b} \delta_{n,c}.$$
Furthermore, the D-matrices satisfy the condition $r_{Z(\gamma)}D_{m,n}^l~=~e^{in\gamma}D_{m,n}^l$, meaning that
$$\calX_{n} =\spn_{\mathbb{C}} \{ D^l_{m,n}\}^{-l\leq m \leq l}_{l=0,1,2,\dots}.$$
Thus for any function $x \in \calX$ we can write the corresponding \emph{Fourier expansion}
\begin{equation} \label{FourierSO3Coef} \scrF(x) = \hat x = (\hat x_{m,n}^l)_I \qquad x= \sum_{I} \hat x_{-m,-n}^l D_{m,n}^l.\end{equation}
with $I$ being the indexing set of triples $(l,m,n)$ with $l$ a non-negative integer and $-l \leq m,n \leq l$.
Fourier coefficients of the convolution $x *_\ell \psi$ can then be directly computed using the Fourier transform
$$\widehat{x *_\ell \psi}_{m,n}^l = \frac{1}{2l+1} \sum_{s=-l}^l \hat x_{m,s}^l \overline{\hat \psi_{n,s}^l}, \qquad x, \psi \in \calX.$$
Observe that if $x \in \calX_p$, then it holds that $\widehat{x *_\ell \psi}_{m,n}^l~=~\frac{1}{2l+1} \hat x_{m,p}^l \overline{\hat \psi_{n,p}^l}$, $x, \psi \in \calX$, so that if $\psi$ is playing the role of weights in the neural network, only its Fourier coefficients $\hat\psi_{n,p}^l = \hat\psi_n^l$ need to be taken into consideration, as when $s\neq p$ no contribution is given to the summation. We write the inverse of the Fourier transform as $\scrF^{-1}(y_{m,n}^l)_I$.

\begin{remark} \label{re:HowFFT}
It is possible to exploit the usual FFT and iFFT is three dimensions for $2\pi$-periodic signals for fast computation of the Fourier transform on $\SO(3)$, for example following the approach given in \cite{huffenberger2010fast}, which we elaborate in Appendix~B. We also use the convention of negative indexing in \eqref{FourierSO3Coef} to simplify the correspondence with the usual Fourier coefficients.
\end{remark}

\section{Our approach}\label{sec:network-architecture}

This section presents the practical design of a General spherical CNN (GsCNN). We first describe the building blocks of an $\SO(3)$-equivariant layer and how they are combined into complete layers, as well as present the spectral UNet architecture used in our experiments.

\subsection{Modules of an SO(3)-equivariant layer}

\paragraph{Fourier transform}
Different operations are best performed in either spatial or spectral form, so an efficient transformations between the two domains in needed. To compute Fourier coefficients $x_{m,n}^l$ from a signal $x\in\calX$ sampled on a spatial grid (and inversely reconstruct from coefficients), we implement a module based on standard 3-dimensional FFT/iFFT. In the special case where $x\in\calX_p$ with $p\in\{0,1\}$, we also use a reduced 2-dimensional FFT/iFFT formulation, which is more efficient due to one dimension being fixed by $n$-equivariance. Further details of our $\SO(3)$-FFT implementation are given in Appendix~\ref{app:Fourier}.

Efficient transform pipelines also exist for HEALPix-based spherical discretizations, see for example \cite{Gorski2005Apr}; we did not pursue that implementation path in this work.

\paragraph{Group convolution}
An $\SO(3)$-(left) convolution filter is computed in the Fourier domain by
\[\textstyle \hat y_{m,n}^l = \frac{1}{2l+1} \sum_{s=-l}^l \hat x_{m,s}^l \psi_{n,s}^l, \qquad x, \psi \in \calX,\]
where $y\in\calX$ is the output function with Fourier coefficients  $\hat y_{m,n}^l$. We can introduce the weights directly in the spectral domain so that by abuse of notation $\psi_{n,s}^l =\hat \psi_{n,s}^l$, and we can drop the complex conjugation on the weights as learning on $\psi_{n,s}^l$ is equivalent to learning on $\overline{\psi_{n,s}^l}$.

In the special cases in which $x\in\calX_p$, most of the coefficients of the filter $\psi_{n,s}^l$ do not provide any contribution, as they are annihilated by contraction with zero-values whenever $p\neq s$. Therefore, it is possible to consider lower dimensional filters $\psi_{n}^l$ without any loss of generality. If we further consider $x\in\calX$ as being a band-limited signal, i.e., with $\hat x_{m,n}^l$ only nonzero for $l \leq L$ for some band-limit $L$, then the weights can be encoded as an array of $L(L+2)+1$ complex values for $x\in\calX_0$ and $L(L+2)$ complex values for $x\in\calX_1$, respectively.

In a General spherical CNN (GsCNN), we use full $\SO(3)$ convolutions without constraining the output spin index. Consequently, if the input lies in $\calX_p$, the convolution output is in the full space $\calX$ in general. We then enforce the desired output spin class in a separate step through the smoothing operator, which is later introduced. We remark that convolution does not mix coefficients belonging to different degrees $l_1$ and $l_2$.

\paragraph{Activation function}

An activation function can in principle be applied in either the spectral or spatial domain. Both come with advantages and disadvantages, but the latter will result in the preferred choice, as nonlinearities in the spectral domain do not mix frequencies when applied point-wise. 

In the spatial domain, pointwise activation functions mix frequencies: applying iFFT following by a nonlinearity in space and FFT causes information from one degree of coefficient to leak into others. This behavior is useful in our setting, since convolution is performed in Fourier space and nonlinearity in spatial domain increases expressivity.

In GsCNNs, activation functions are applied after a full convolution whose output lies in the general $\calX$. Therefore, as the signal does not belong to a specific $\calX_p$ the intermediate representation $n$-equivariance does not need to be preserved, and the subsequent smoothing operator will be used to restore the desired signal class.

\paragraph{Smoothing operator}
A central element in GsCNNs is the orthogonal projection of $x\in\calX$ onto $\calX_q\subset\calX$, which we call a \emph{smoothing operator} 
\[
\scrS_q: \calX \to \calX_q
\]
following \cite{bietti2021sample}, for which $q=0$ corresponds to scalar fields and $q=1$ to real vector fields. This is explicitely defined for a signal $x\in\calX$ as
\[\scrS_q(x) = \frac{1}{2\pi} \int_0^{2\pi} e^{iq\theta} r_{Z(-\theta)} x \, d\theta,\quad\quad\quad\widehat{\scrS_q x} = (\hat x_{m,n}^l \delta_{n,q})_I\]
in the spatial domain and spectral domain respectively.

\paragraph{Spectral pooling}
As for activation functions, pooling can be performed in either the spectral or spatial domain. In the spatial domain, however, pooling is only approximately equivariant. Depending on the grid, this can be mitigated by weighting pooling to account for the surface area measure, as outlined in \cite{esteves2018learning}. In the spectral domain, pooling is achieved by truncating the Fourier orders of the Wigner-D expansion after applying the nonlinearity in the spatial domain.

\paragraph{Batch and layer normalization}
Normalization stabilizes training but must be designed so as not to break the spin structure of intermediate representations. Since complex-valued $n$-equivariant signals with $n\neq 0$ have zero mean by symmetry, additive centering is admissible only for spin-0 channels. For spin-$n\neq 0$ channels, normalization can only rescale by invariant second–order statistics. For this reason, we apply normalization \emph{per spin block} and use statistics that are invariant under left-action of $\mathrm{SO}(3)$. Practically, we employ layer normalization rather than batch normalization, since the small batch sizes imposed by memory constraints lead to noisy batch statistics, whereas layer normalization is independent of the batch dimension. All normalization is performed in the spectral domain, where means and variances correspond to simple combinations of Wigner-$D$ coefficients. A complete derivation of equivariance-preserving normalization for complex and spin-valued signals is provided in Appendix~\ref{app:normalization}.

\paragraph{Dropout}
Dropout can be applied either in the spectral domain (on Fourier coefficients) or in the spatial domain (on a grid). Even when weighting the sampling to account for grid non-uniformity, spatial dropout remains only approximately equivariant. However, since dropout is used only during training, the model appears equivariant at validation and test time. It is also worth noting that, when applied in the spatial domain dropout mixes information across different Fourier degrees.

\subsection{The complete layers} \label{subsec:layers}

A complete layer should be a map $\calX_p\to\calX_q$ built of equivariant modules. Of the presented modules, convolution, pooling, smoothing operator, and normalization are operations that are best performed in the spectral domain, while activation functions, and dropout are best performed in the spatial domain. See Figure~\ref{fig:layer} for an illustration.

For an activation function $\sigma$ notice that
$\sigma(x *_\ell \psi)$
always preserves domain-equivariance, as the left-action on $\calX$ is defined through pre-composition, while the nonlinearity is applied in post-composition. On the other hand, neither convolution nor nonlinearity preserve signal-equivariance which means that after an application of either of the modules, a signal originally in $\calX_p$ will be mapped to the general $\calX$. If we want $L: \calX_p \to \calX_q$ that satisfies $q$-equivariance on the codomain we can apply an orthogonal projection 
$\scrS_q: \calX \to \calX_q$ given by the smoothing operator.

Since smoothing operators commute with the left-action, we can define the complete layer $L: \calX_p \to \calX_q$ as
$$L(x) = \scrS_q\left(\sigma \left(\scrF^{-1}\left(\tfrac{1}{2l+1}  \hat x_{m,p}^l {\psi_{n}^l} \right)_I\right)\right),$$
where $\psi_n^l$, $-l \leq n \leq l$ are the weights to be learned. In our examples we will only use $p,q =0,1$, as these correspond to the type of data we are interested in. Optionally, normalization and dropout can be added.

\begin{figure}[H]
    \centering
    \includegraphics[width=\linewidth]{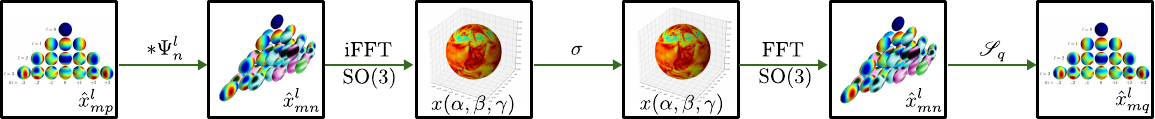}
    \caption{The proposed $\SO(3)$-equivariant layer, mapping a $p$-equivariant signal to a $q$-equivariant signal. It consists of group convolution, inverse Fourier transform on $\SO(3)$, nonlinearity, Fourier transform on $\SO(3)$, and smoothing in $q$. A scalar field is encoded with $p=0$, while $p=1$ corresponds to a vector field.}
    \label{fig:layer}
\end{figure}

\subsection{Classification head}

So far we have introduced models that map scalar and/or vector fields to scalar and/or vector fields by equivariant maps. However, if the task is a classification problem, thus invariant rather than equivariant, a few modifications are needed.

For a signal $x\in\calX$, the zero-order coefficient $x_{00}^0$ of each channel of its Fourier representation is constant on $\SO(3)$, and therefore invariant under the left-action of $\SO(3)$. Thus, a \textbf{classification head} built as a fully connected layer with resolution $L=0$, which only evaluates coefficients~$x_{00}^0$ of the different channels, is invariant.

\subsection{Loss function on \texorpdfstring{$S^2$}{S2} and performance measure}

For regression tasks involving fields on the sphere, we weight all spatial integrals by the surface area element $\sin\beta\,d\beta\,d\alpha$ to ensure a correct weighting that is not over-represented at the equator and under-represented at the poles. The sine-weighted losses below are therefore the natural discretizations of the canonical $L^2(S^2)$ norms for scalar and vector fields.

\paragraph{Vector and scalar fields.}

At each point $p \in S^2$, the Riemannian metric induces a canonical inner product on the tangent space $T_pS^2 \cong \mathbb{R}^2$, making the pointwise $\ell_2$ distance the unique rotationally invariant measure of error between two tangent vectors. Integrating over the sphere with respect to $dS = \sin\beta\,d\beta\,d\alpha$ gives the $L^2$ distance between for signals on $\SO(3)$.

For both vector and scalar fields, we train by minimizing the sine-weighted mean squared error (MSE):
\[
\mathcal{L}_{\mathrm{MSE}}(\mathbf{X},\mathbf{Y})
= \frac{1}{W}\sum_{\alpha,\beta} \sin\beta \cdot \lvert x^{\alpha\beta} - y^{\alpha\beta}\rvert^2,
\]
where $x^{\alpha\beta}$ and $y^{\alpha\beta}$ are the predicted and target values at grid point $(\alpha,\beta)$, and $|\cdot|$ denotes the Euclidean norm on $\mathbb{R}^2$ for vector fields and the absolute value for scalar fields.

For evaluation, we report the corresponding sine-weighted root mean squared error (RMSE):
\[
\mathcal{L}_{\mathrm{RMSE}}(\mathbf{X},\mathbf{Y}) = \sqrt{\mathcal{L}_{\mathrm{MSE}}(\mathbf{X},\mathbf{Y})},
\]
averaged over the batch and feature dimensions. We train with MSE rather than RMSE because it yields simpler, more stable gradients; both have the same minimizer, so training with MSE is fully consistent with reporting RMSE.

\paragraph{Classification.}

For classification tasks (such as Spherical MNIST), the network output is a class probability vector rather than a field on $S^2$. In this case we use the standard cross-entropy loss
\[
\mathcal{L}_{\mathrm{CE}}(\hat{y}, y) = -\sum_c y_c \log \hat{y}_c,
\]
where $y$ is the one-hot target label and $\hat{y}$ is the predicted class distribution. Performance is reported as classification accuracy.

\subsection{UNet architecture}
Following the work by \cite{esteves2023scaling}, we also introduce a UNet architecture for signals $x \in \mathcal{X}_p$, where pooling is performed in the spectral domain to obtain coarser representations in deeper layers (see Figure~\ref{fig:unet}). The unpooling step is implemented by zero-padding in the spectral domain the lower-resolution representation and concatenating it with the residual features provided by the skip connections.

\begin{figure}[H]
    \centering
    \includegraphics[width=0.8\linewidth]{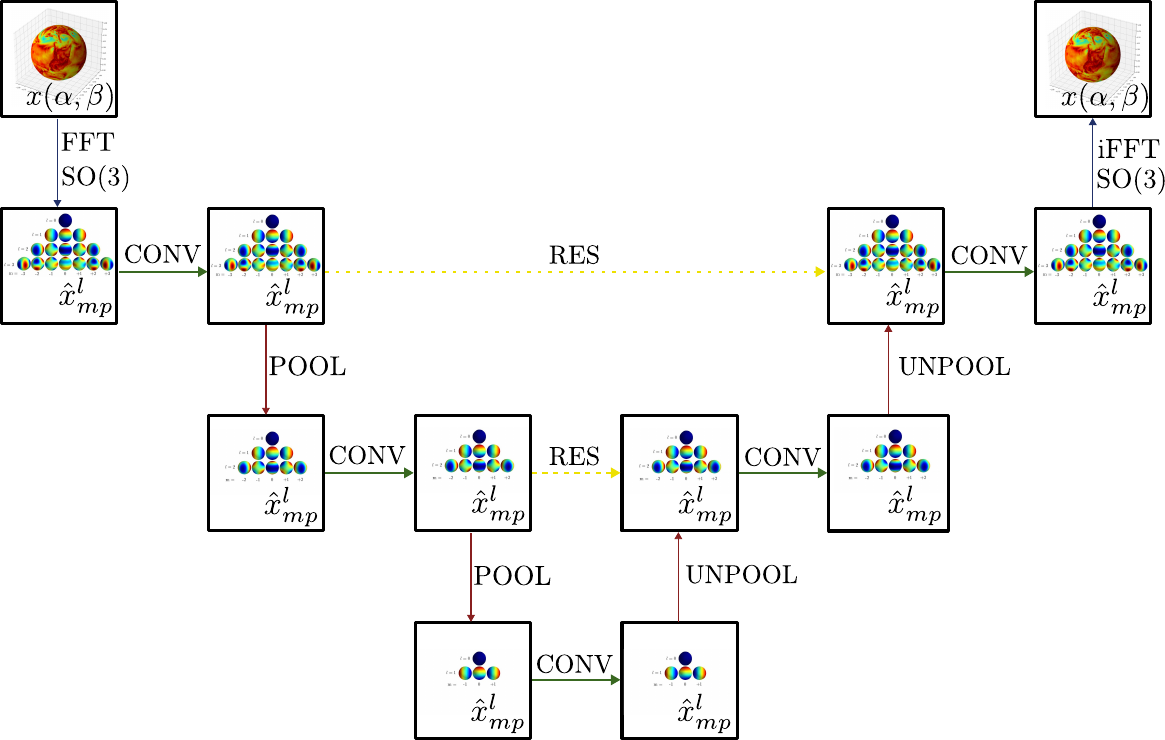}
    \caption{The proposed General $\SO(3)$-equivariant UNet-type architecture. CONV layers are as presented in Section \ref{subsec:layers} and as visualized in Figure \ref{fig:layer}. Yellow dotted lines (RES) correspond to the residual connections. Pool and Unpool are performed in the spectral domain.}
    \label{fig:unet}
\end{figure}

\subsection{Relation of our work to spherical CNN (sCNN)}\label{sec:relation_scnn}

Both spherical CNNs (sCNNs) and general spherical CNNs (GsCNNs) share the same overarching goal: learning on spherical data while respecting the natural $\SO(3)$ symmetries.  
The key distinction lies in \emph{where} $n$-equivariance constraints are imposed, and how these constraints affect the expressivity of the architecture.

In sCNNs, signal-equivariance is enforced directly at the level of the convolutional filters: every intermediate representation is required to lie in a fixed spin subspace.   This, in turn, restricts the class of admissible nonlinearities, since only activation functions that preserve the spin structure can then be used.

For a general signal $x \in \mathcal{X}$, convolution takes the form
\(
\widehat{y}_{m,n}^l
    = \frac{1}{2l+1}\,\sum_{s=-\ell}^\ell\widehat{x}_{m,s}^l\,\psi_{s,n}^l .
\)
In our GsCNN framework, the input is assumed to lie in a specific subspace $\mathcal{X}_p$, meaning that convolution acts as a map $\mathcal{X}_p \longrightarrow \mathcal{X}$ and the filter coefficients simplify to $\psi_n^l$ because $\psi_{s,n}^l = 0$ whenever $s \neq p$.  
The resulting expression becomes
\(
\widehat{y}_{m,n}^l
    = \frac{1}{2l+1}\,\widehat{x}_{m,p}^l\,\psi_n^l .
\)

In contrast, sCNNs impose an additional restriction: each filter has only one learnable coefficient per degree $l$ for a fixed spin mapping.  
This ensures that the output remains in $\mathcal{X}_p$, but removes one degree of freedom at each frequency:
\(
\widehat{y}_{m,p}^l
    = \frac{1}{2l+1}\,\widehat{x}_{m,p}^l\,\psi^l .
\)

Restricting convolution filters has three main consequences:  
(i) a trade-off between expressivity and computational cost,  
(ii) constraints on the choice of nonlinearities, and  
(iii) limitations on mixing information across spins.

The first point is straightforward: more parameters yield higher expressivity but also higher computational cost. The other two points are more subtle. When feature maps are constrained to lie in a fixed subspace $\mathcal{X}_p$, many common activation functions cannot be applied, as they would break signal-equivariance.  
Moreover, channels belonging to different spin subspaces cannot be freely mixed.

For example, a ReLU does \emph{not} preserve spin-1 structure, whereas a nonlinearity that rescales the magnitude of the signal does; however, neither of these leaks information between spin-0 and spin-1 channels.

The \emph{phase–collapse nonlinearity}, introduced in \cite{esteves2023scaling}, partially alleviates this limitation. Let a feature map be decomposed into its spin blocks,
\(
x = (x^{(0)},\, x^{(1)}),
\)
with $x^{(0)}$ containing spin-0 channels and $x^{(1)}$ containing spin-1 channels.  
Each spin-1 coefficient is a complex scalar which transforms under the right-action as $z \mapsto e^{-i\theta} z$. The phase-collapse operator constructs an updated spin-0 block by combining the original spin-0 features with the magnitudes of \emph{all} channels:
\(
|x| = \bigl( |x^{(0)}|,\, |x^{(1)}| \bigr),
\)
and applying learned complex-linear maps to these two quantities:
\[
\widetilde{x}^{(0)}
    = \frac12 \bigl(
        W_{\mathrm{spin0}}\,x^{(0)}
        + W_{\mathrm{abs}}\,|x|
      \bigr),
\]
where $W_{\mathrm{spin0}}$ and $W_{\mathrm{abs}}$ are complex weight matrices.
The resulting feature map is
\[
\mathrm{PC}(x)
    = \bigl( \widetilde{x}^{(0)},\; x^{(1)} \bigr).
\]

The key observation is that the magnitude of a spin-1 feature is a spin-0 feature, thus invariant under right-action, so injecting $|x^{(1)}|$ into the spin-0 block preserves signal-equivariance.  

By contrast, GsCNNs permit rich intermediate representations and rely on the smoothing operator to recover the desired spin structure only at the output of each block.   This separation between expressivity and equivariance leads to a strictly larger hypothesis class, enabling GsCNNs to model cross–spin interactions that are inaccessible to architectures that enforce spin preservation at every layer.

\section{Experiments}\label{sec:experiments}

To assess the effectiveness and generality of the proposed architecture, we evaluate it on two qualitatively different types of spherical data: meteorological fields from the ERA5 reanalysis dataset \cite{Hersbach2020Jul}, which offer a challenging real-world setting involving both scalar (temperature) and vector (wind) signals on the sphere, and the Spherical MNIST dataset \cite{cohen2018spherical}, which provides a controlled and synthetic benchmark. Our experiments are designed to validate the functionality of the general $\SO(3)$-equivariant architecture (GsCNN) and to compare its performance against two baselines: a classical CNN defined on a latitude–longitude grid and a spin-weighted spherical CNN. In particular, we study how these models behave under data augmentation by $\SO(3)$ rotations, and how well they generalize to out-of-distribution orientations.

For fairness, all architectures are roughly matched in model capacity by adjusting the number of channels and maintaining comparable depth and structural design across models. This experimental setup enables a direct comparison of expressivity, performance, and domain-equivariance to rotations across both synthetic and real spherical tasks.

The PyTorch \cite{pytorch_package} implementation of the network, data of the experiments, and documentation can be found at \href{https://github.com/ballerin/GsCNNs}{https://github.com/ballerin/GsCNNs}.

\subsection{Equivariance by rotation}
Both sCNNs and GsCNNs are $\SO(3)$-equivariant up to numerical error: if $F:\mathcal{X}\to\mathcal{X}$ denotes an $\SO(3)$-equivariant model and $\ell_B$ the left-action of $B\in\SO(3)$, then $\ell_B F(x)=F(\ell_B x)$. Thus, rotating the domain of the input or of the output yields the same result. In contrast, a classical CNN defined on a latitude–longitude grid is only translation-equivariant in the planar domain and does not inherit rotational equivariance on the sphere (except for the special case of rotations $Z(\theta)$ about the $z$-axis). This discrepancy is illustrated in Table~\ref{tab:cnn-equivariance-error}, which compares equivariance errors for a standard CNN and a GsCNN.

\begin{table}[H] 
\begin{center}
\begin{tabular}{ |p{12mm}|p{26mm}|p{26mm}|p{26mm}|p{26mm}|  }
 \hline
  \hfil Model\hfil & \hfil Ground truth\hfil & \hfil Predicted $\beta=0$ \hfil&\hfil Predicted $\beta=\frac{\pi}{4}$ \hfil& Rotation error\hfil\\
 \hline
  \hfil CNN \hfil  & 
\includegraphics[trim={4cm 4cm 4cm 4cm},clip,width=24mm,valign=c]{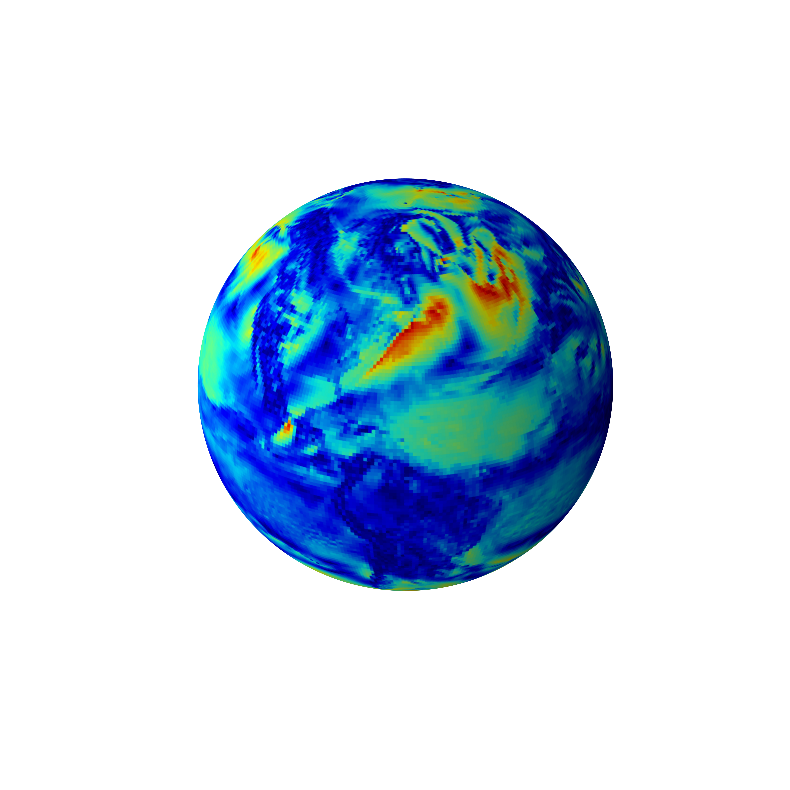} & 
\includegraphics[trim={4cm 4cm 4cm 4cm},clip,width=24mm,valign=c]{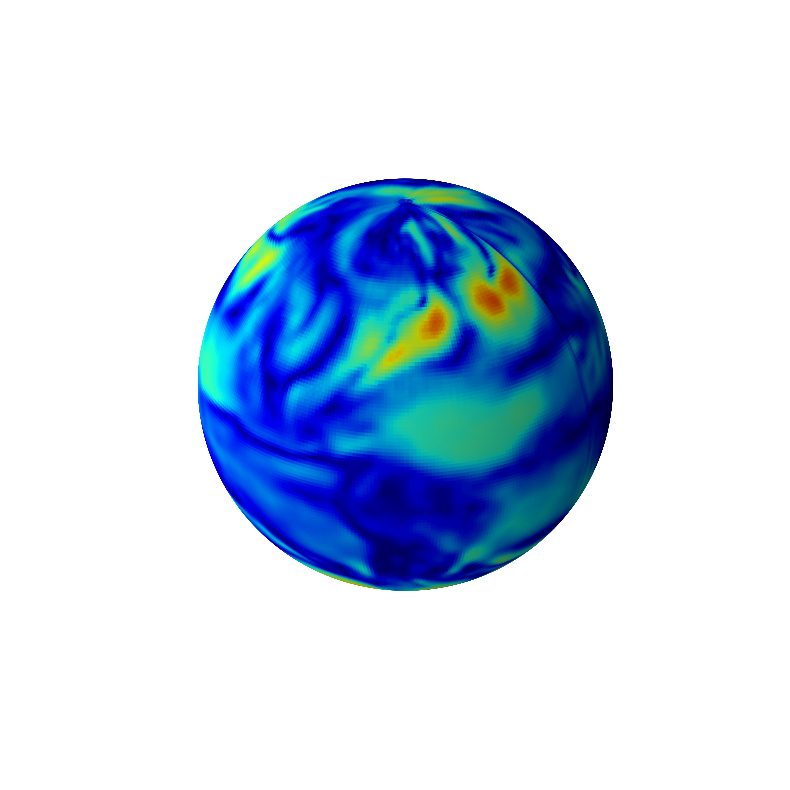}& 
\includegraphics[trim={4cm 4cm 4cm 4cm},clip,width=24mm,valign=c]{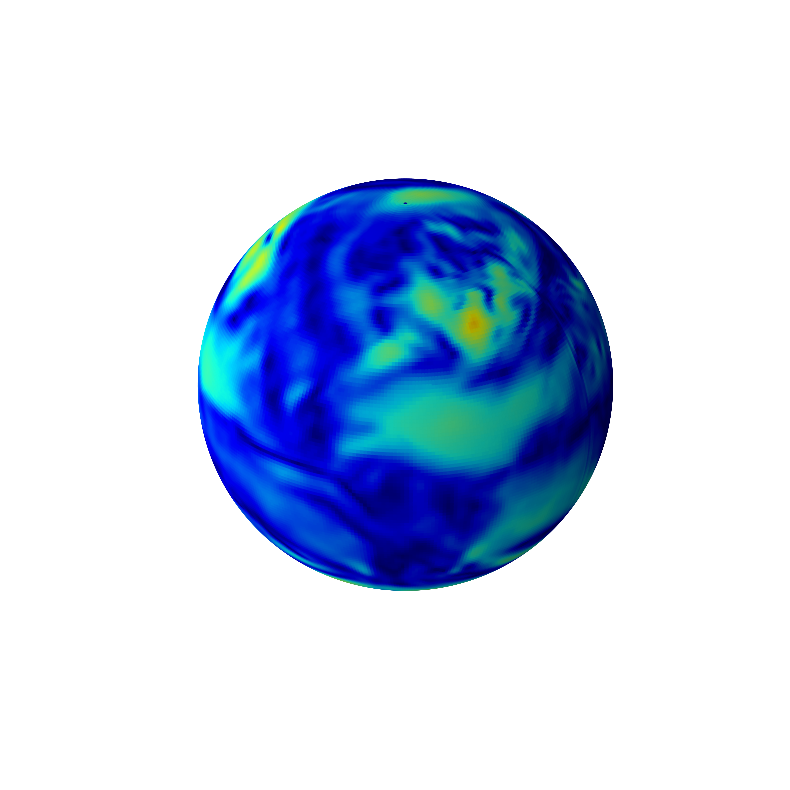}& 
\includegraphics[trim={4cm 4cm 4cm 4cm},clip,width=24mm,valign=c]{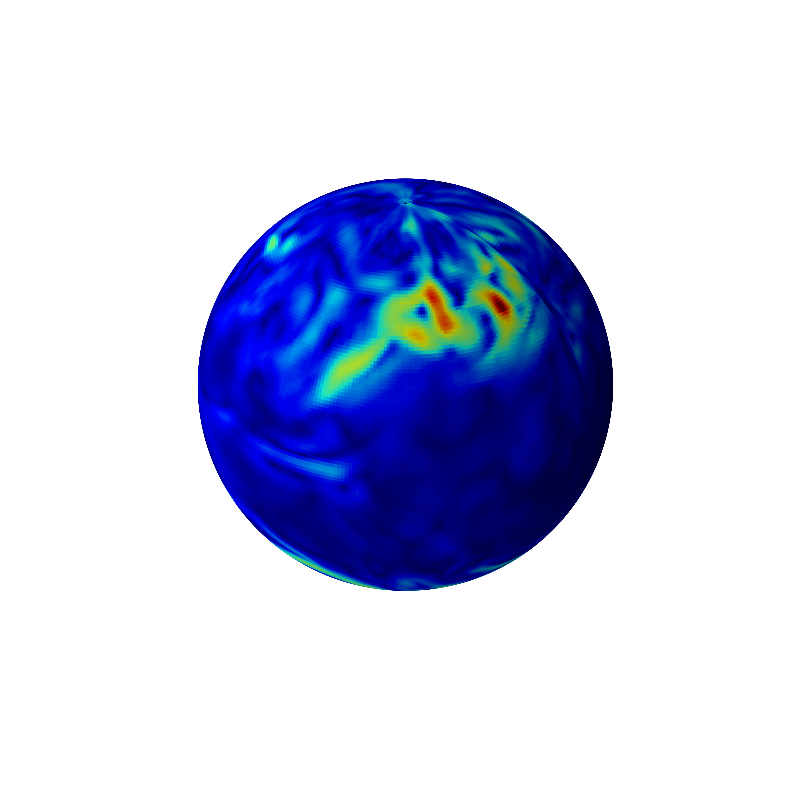} \\
\hline
\hfil GsCNN \hfil &
   \includegraphics[trim={4cm 4cm 4cm 4cm},clip,width=24mm,valign=c]{figures/rotation_equivariance/rescaled_NT_ground_truth.png} & 
   \includegraphics[trim={4cm 4cm 4cm 4cm},clip,width=24mm,valign=c]{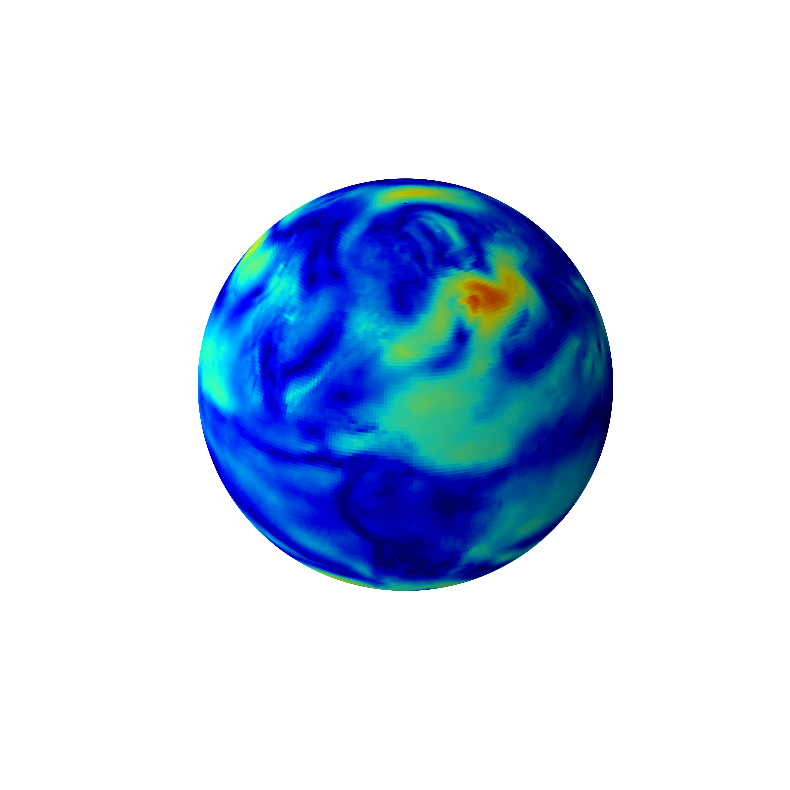}& 
   \includegraphics[trim={4cm 4cm 4cm 4cm},clip,width=24mm,valign=c]{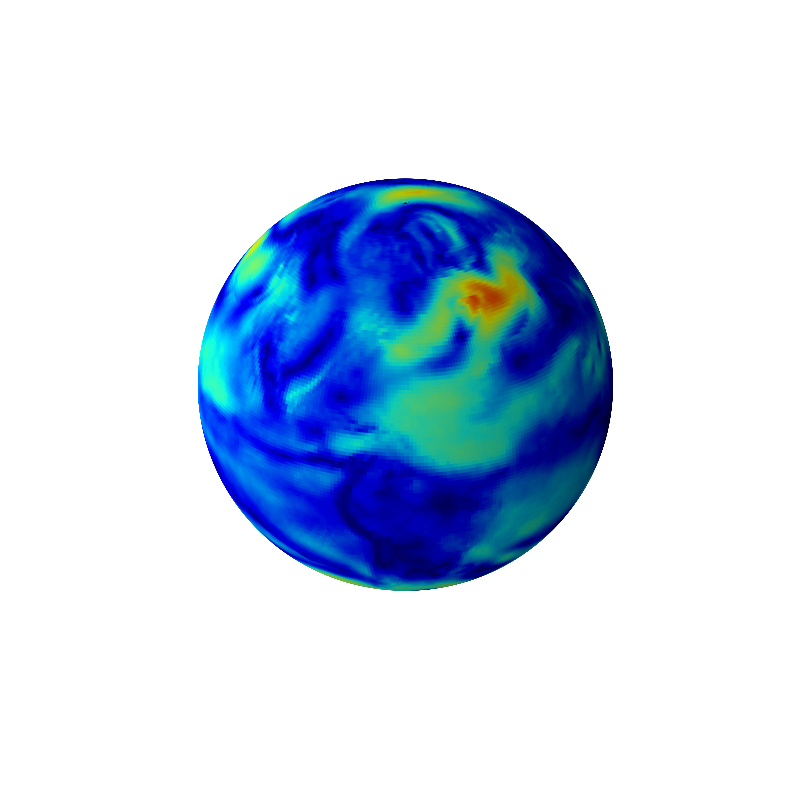}& 
   \includegraphics[trim={4cm 4cm 4cm 4cm},clip,width=24mm,valign=c]{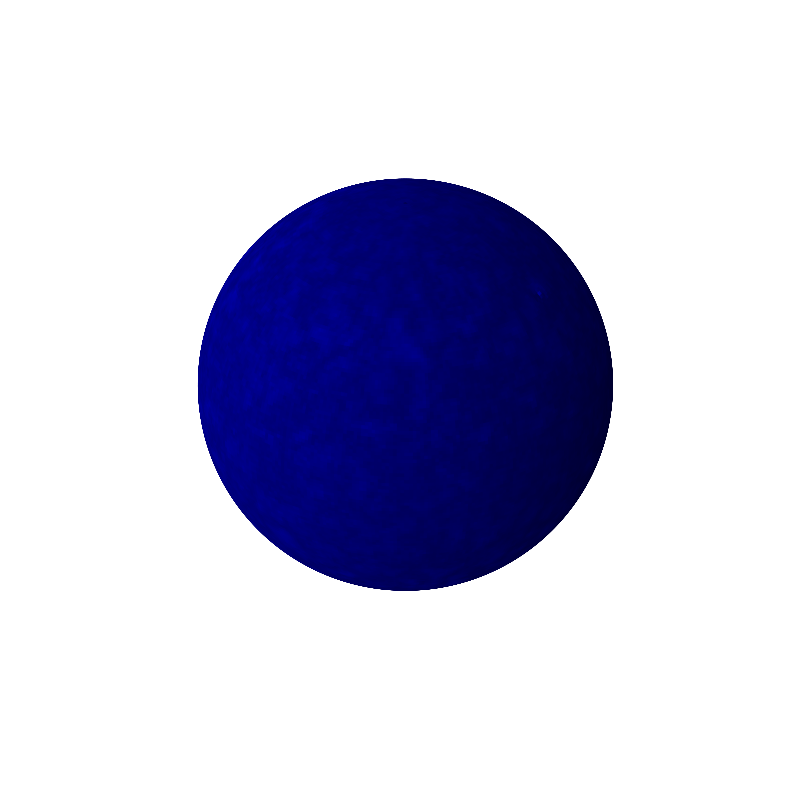} \\
 \hline
 \multicolumn{5}{c}{\vspace{0mm}} \\
 \multicolumn{5}{c}{0 km/h \includegraphics[width=75mm, height=4mm, valign=c]{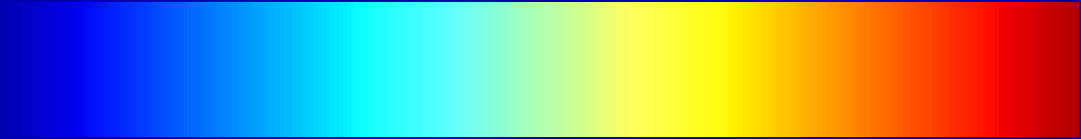} 28 km/h}\\
\end{tabular}
\caption{Comparison between an $\SO(3)$-equivariant neural network and a classical CNN in the task of predicting vector fields. Both models are trained without data augmentation (i.e., without applying $\SO(3)$ transformations to the training samples). For visualization, only the magnitudes of the predicted vector fields are shown. The second column displays the ground-truth vector field (identical for both models). The third column shows each model’s prediction. The fourth column shows the prediction obtained when the input sample is rotated by the Euler rotation $Y(\pi/4)\in\SO(3)$, visualized in the original reference frame to enable direct comparison with the unrotated prediction. The final column reports the equivariance error, defined as the difference between the third and fourth columns. While the $\SO(3)$-equivariant model yields a negligible equivariance error, the classical CNN exhibits a strong dependence on input orientation.}
\label{tab:cnn-equivariance-error}
\end{center}
\end{table}

\subsection{ERA5 weather data} \label{subsec:ERA5}

The ERA5 reanalysis dataset provides globally gridded estimates of key atmospheric variables, making it a natural testbed for models that must process scalar and vector fields on the sphere. In our setting, temperature corresponds to a spin‑0 (scalar) field, while horizontal wind can be represented as a spin‑1 (vector) field, allowing us to evaluate how well different architectures handle different signal-equivariance classes. In this section, we use ERA5 temperature and wind fields to assess the behavior of classical CNNs, spherical CNNs (spin-weighted), and general SO(3)-equivariant models, when applied to real-world geophysical data where rotational symmetries are intrinsic.  Our goal is not to compete with operational weather forecasting systems, but to compare how these architectures process, predict, and reconstruct physically meaningful tensor fields on the sphere. The data used during training, validation, and testing is a subset of the ERA5 hourly dataset \cite{Hersbach2020Jul} concerning temperature at 2 meters of elevation (2m temperature) and wind speed and direction at 100 meters of elevation (100m u-component of wind, 100m v-component of wind). A detailed overview over the dataset can be found in Appendix \ref{app:datasets}.

\paragraph{Future wind prediction}

Given wind speed and direction at time $t$ the task of the models in this experiment is to predict wind speed and direction at $t$+24hrs, providing a benchmark for vector-to-vector prediction.
The baseline model is a CNN UNet \cite{Ronneberger2015May} with depth 4 and layers made of two $3 \times 3$ convolutions with ReLU, for a total of 7.8 mln parameters.
The general $\SO(3)$-equivariant model is a UNet-type architecture with $\SO(3)$-(left)equivariant fully connected layers for a total 8.6 mln parameters. Our implementation of spin-weighted spherical CNN modeled after Esteves et al. \cite{esteves2023scaling}, also as a UNet-type architecture, has 8.7 mln parameters in total.
%Model weights: CNN 29.62MB, GsCNN 32.64MB, sCNN 33.15MB

\paragraph{Temperature to wind estimation}

The task in this experiment is to predict wind speed and direction (vector field) at a certain time $t$ given temperature (scalar field) at the same time $t$, providing a benchmark for equivariant scalar-to-vector neural networks.
The models used in this test are the same ones employed in the \textit{future wind prediction} task, with the only difference being the choice of spins-representation in the layers.

\paragraph{$\SO(3)$-equivariant wind autoencoder}

The task in this experiment is to perform data compression on vector fields to fit the data to a latent space of a given dimension by using autoencoders.
The baseline model is a CNN autoencoder of 8+8 layers with latent dimension 512, resulting in a total of 2.0 mln parameters. The general $\SO(3)$-equivariant autoencoder using general $\SO(3)$-(left)equivariant layers produces a model weight of 883k parameters. Our implementation autoencoder based on spin-weighted spherical CNN layers result in a total model weight of 907k parameters.
%Model weights: CNN 7.52MB, GsCNN 3.37MB, sCNN 3.46MB

\subsubsection{ERA5-Results}
In early experiments, concerning wind-to-wind prediction, we compared GsCNNs to sCNNs built with hidden layers made of only spin-1 channels. As both input and output are vector fields (spin-1 signals), this seemed like a reasonable design choice, and under these circumstances GsCNNs greatly outperformed  sCNNs, as shown in Figure \ref{fig:early_experiments} clearly showing the improved expressivity of the general $\SO(3)$-equivariant networks.

\begin{figure}[H]
    \centering
    \includegraphics[trim={2mm 2mm 2mm 2mm},clip,width=0.6\textwidth]{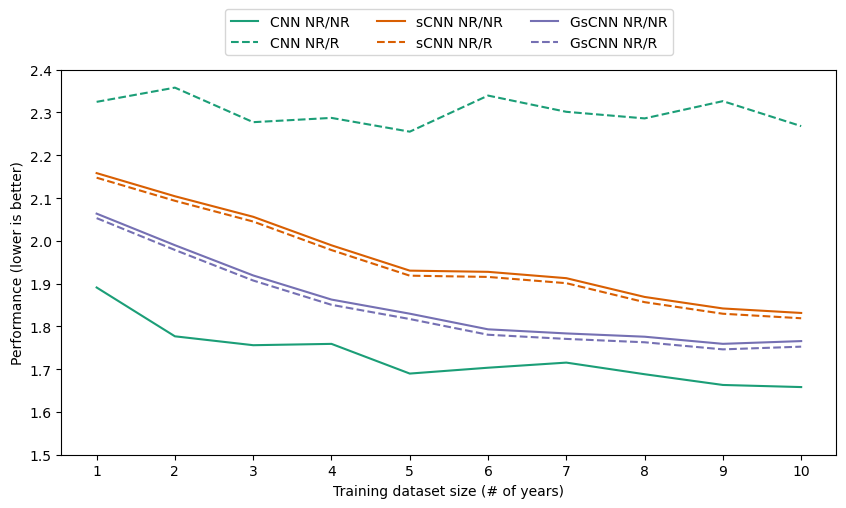}

    \caption{Comparison in performance in early experiments between a classical UNet (CNN), a spherical CNN UNet (sCNN) with spin-1 hidden layers and magnitude nonlinearity, and our proposed architecture (GsCNN), trained on an increasing number of samples (years from 2000 to 2009).}
    \label{fig:early_experiments}
\end{figure}

In later experiments, we tested sCNNs with different architectures and different nonlinearities, among which the \textit{phase collapse} nonlinearity proposed by Esteves et al. \cite{esteves2023scaling}, which is designed to preserve the spin structure while mixing information coming from different types of channels. With this choice of nonlinearity, the performance of sCNNs improved significantly, and the gap between sCNNs and our proposed architecture was reduced, as shown in Figure \ref{fig:comparison_dataset_size}.

\begin{figure}[H]
    \centering
    \begin{subfigure}[t]{0.48\textwidth}
        \includegraphics[trim={2mm 2mm 16mm 5mm},clip,width=\textwidth]{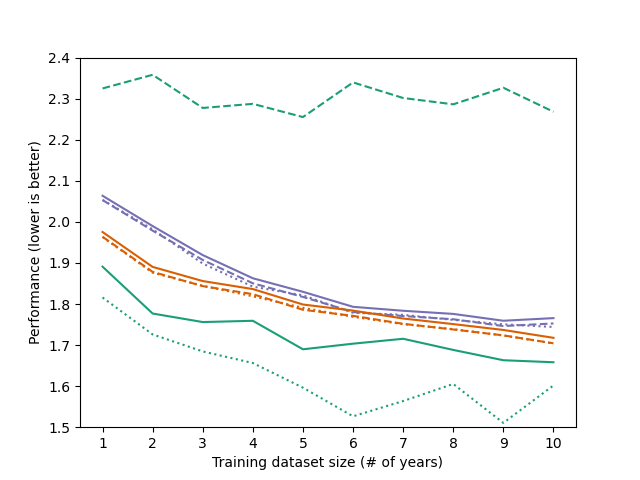}
        \caption{Prediction of Wind at $T+24h$ from Wind at time $T$}
    \end{subfigure}
    \begin{subfigure}[t]{0.48\textwidth}
        \includegraphics[trim={2mm 2mm 16mm 5mm},clip,width=\textwidth]{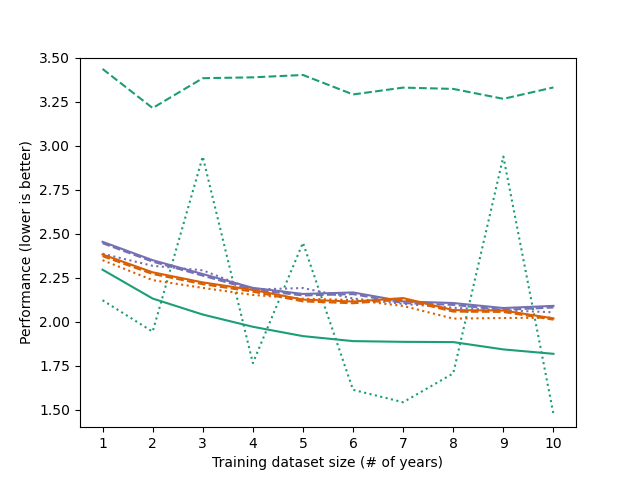}
        \caption{Prediction of Wind from Temperature, same time $T$}
    \end{subfigure}
    \begin{subfigure}[t]{0.48\textwidth}
        \includegraphics[trim={2mm 2mm 16mm 5mm},clip,width=\textwidth]{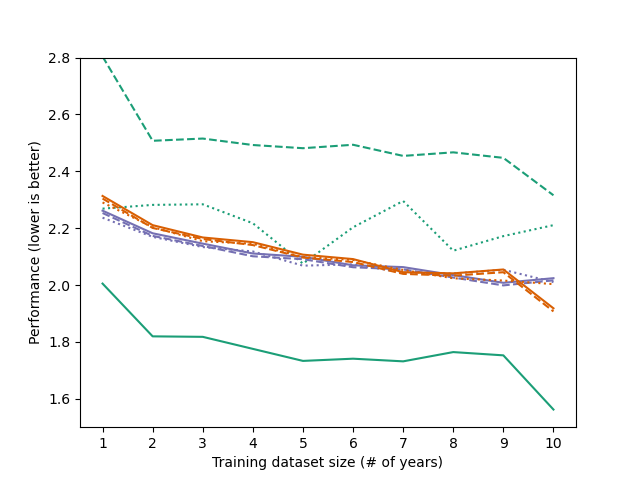}
        \caption{Autoencoder compression on Wind data}
    \end{subfigure}
    \begin{subfigure}[t]{0.48\textwidth}
        \includegraphics[trim={0mm -10mm 0mm 0mm},clip,width=\textwidth]{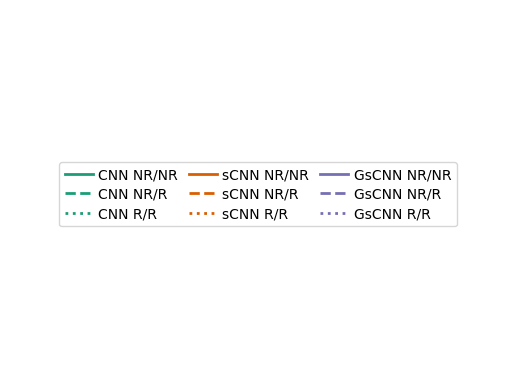}
    \end{subfigure}
    \caption{Comparison in performance between a classical UNet (CNN), our implementation of spherical CNN UNet (sCNN), and our proposed architecture (GsCNN), trained on an increasing number of samples (years from 2000 to 2009). Both equivariant architecture generalize to unseen rotations without data augmentation. Increasing the size of the training dataset does not improve the performance of a classical CNN on samples out of distribution. CNNs struggle to converge on random initializations when data augmentation is used in training, as clearly visible from the plots. }
    \label{fig:comparison_dataset_size}
\end{figure}

An overview of the different tasks evaluated over the whole 10 years of data can be appreciated in Table \ref{table:ERA5_table_results}, differentiating between data augmentation on $\SO(3)$ rotations has been used in training and/or test.

Although the different models have a comparable number of parameters, their training speed varies substantially. In the vector–to–vector prediction task, for instance, our implementation of a spherical CNN was approximately $193\times$ slower than a classical CNN, while GsCNNs were roughly $2781\times$ slower when averaged over several epochs. This discrepancy arises partly from the highly optimized, batch‑efficient implementations available for classical CNNs, and partly from the additional computational dimension inherent to GsCNNs compared to spherical CNNs. While more efficient low‑level implementations of both forward and backward passes could reduce the overhead for sCNNs and GsCNNs, the increased dimensionality of GsCNN operations remains an intrinsic computational burden.

However, the different architectures exhibit markedly distinct convergence behaviours. In particular, GsCNNs required an unexpectedly small number of epochs before meeting the 
early–stopping criterion. Across our experiments, sCNNs needed on average $3.5\times$ as many epochs as GsCNNs to reach their minimum validation loss, while classical CNNs required $14.9\times$ the number of epochs. When this discrepancy in convergence speed is taken into account, GsCNNs remained slower than both sCNNs and CNNs in terms of wall–clock time, but now by reduced factors of 
approximately $55\times$ and $187\times$, respectively.

\begin{table}[H]
\begin{center}
\begin{tabular}{ |p{18mm}|p{12mm}|p{15mm}|p{15mm}|p{15mm}|  }
 \hline
 \textbf{Task} & \textbf{Model} & \textbf{NR/NR}  & \textbf{NR/R} & \textbf{R/R}\\
 \hline
 \multirow{3}{4em}{Wind to wind} &
   CNN   &  \textbf{1.658142} & 2.268141 &  \textbf{1.601641} \\
 & sCNN & 1.717635 & \textbf{1.704290} & 1.703927 \\
 & GsCNN &   1.765632  & 1.752622   & 1.744074 \\
 \hline
  \multirow{3}{4em}{Temp. to wind} &
   CNN   & \textbf{1.816913}  & 3.330536  & \textbf{1.475293}\\
 & sCNN & 2.018360 & \textbf{2.011011} & 2.022128 \\
 & GsCNN &   2.089329   & 2.080321  & 2.052904 \\
 \hline
  \multirow{3}{4em}{Wind autoencoder} &
   CNN   & \textbf{1.562149}  & 2.315754  & \textbf{1.875575}\\
 & sCNN & 1.918172 & \textbf{1.907293} & 2.003302 \\
 & GsCNN & 1.938534 & 1.924852  & 2.010267 \\
 \hline
\end{tabular}
\end{center}
\caption{Average performance of a convolutional UNet (CNN), our implementation of spherical CNN network (sCNN), and our proposed architecture (GsCNN) evaluated on the test dataset (2022, 2023). In this table R~=~rotated (around the $Y$-axis), NR~=~non-rotated and X~/~Y denotes that the network was trained on X and evaluated on Y.  Training, both in the R and NR case, was performed with years 2000 to 2009.}
\label{table:ERA5_table_results}
\end{table}

\subsection{Spherical MNIST}

\begin{figure}[ht]
    \centering
    \begin{subfigure}{0.39\textwidth}
        \centering
        \includegraphics[height=3cm]{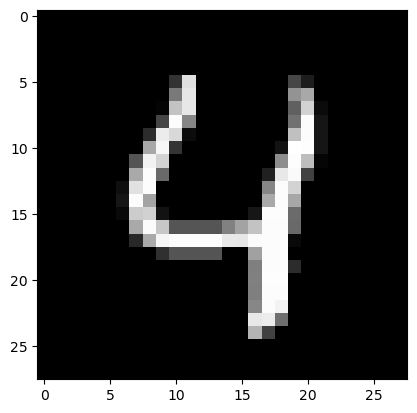}
        \caption{Original MNIST sample}
    \end{subfigure}
    \begin{subfigure}{0.59\textwidth}
        \centering
        \includegraphics[height=3cm]{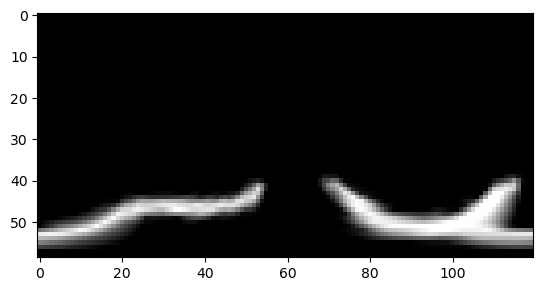}
        \caption{Sample projected onto the southern hemisphere}
    \end{subfigure}
    \begin{subfigure}{0.49\textwidth}
        \centering
        \includegraphics[height=4cm]{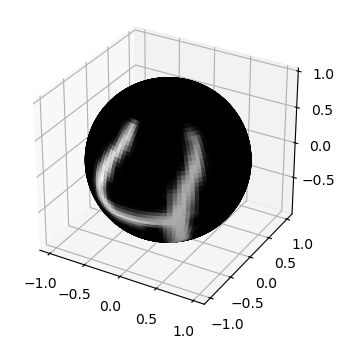}
        \caption{3D visualization of spherical projection of a MNIST sample}
    \end{subfigure}
    \begin{subfigure}{0.49\textwidth}
        \centering
        \includegraphics[height=4cm]{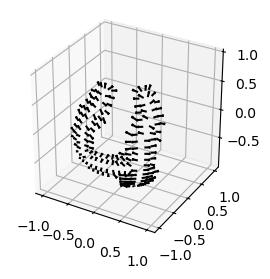}
        \caption{3D visualization of vector field derived from MNIST data}
    \end{subfigure}
    \caption{}
    \label{fig:spherical_mnist}
\end{figure}

This dataset was originally introduced by Cohen et al \cite{cohen2018spherical} and consists of the usual MNIST dataset projected onto the 2-sphere with a stereographic projection $\R^2\to S^2$.

We compared in the task of classifying hand-written digits in the spherical MNIST dataset a classic CNN-based classifier with 1.6 mln parameters, a spherical CNN with an invariant predictive head with a total of 2.2 mln parameters, and a General spherical CNN with invariant predictive head with a total of 2.0 mln parameters.

\subsubsection{Spherical MNIST - Results}

In these experiments, we introduce three types of data augmentations: NR, RB, and RF. The NR setting (“Not Rotated”) applies no augmentation; RB consists of rotations about the $Y$‑axis, denoted by $Y(\beta)$; and RF corresponds to general 3D rotations parameterized by $Z(\alpha)Y(\beta)Z(\gamma)$. A summary of the resulting model performance under these augmentation regimes is provided in Table~\ref{table:MNIST_table_results}.

As expected, the equivariant models maintain strong performance regardless of data augmentation, whereas conventional CNNs struggle to classify rotated samples when no rotational augmentations are used during training. Notably, CNNs fail to generalize to random rotations even when they are trained with a restricted class of rotations (for example the class "RB" of rotations about the $Y$-axis). Our findings indicate that CNNs can achieve good performance on out‑of‑distribution rotated samples when appropriate data augmentation is employed during training, in contrast to the observations reported in \cite{cohen2018spherical}. Furthermore, equivariant networks exhibit slightly better performance when classifying vector fields derived from the digit shapes than when classifying their corresponding scalar fields.

\begin{table}[H]
\begin{center}
\begin{tabular}{ |p{18mm}|p{12mm}|p{14mm}|p{14mm}|p{14mm}|p{14mm}|p{14mm}|  }
 \hline
 \textbf{Task} & \textbf{Model} & \textbf{NR/NR}  & \textbf{NR/RF} & \textbf{RB/RB} & \textbf{RB/RF} & \textbf{RF/RF}\\
 \hline
 \multirow{3}{4em}{Scalar to label} &
   CNN   & \textbf{99.47}\% & 16.20\% & \textbf{98.68}\% & 47.87\% & \textbf{98.75}\%\\
 & sCNN & 96.81\% & 97.38\% & 97.54\% & 97.37\% & 97.23\% \\
 & GsCNN & 98.23\% & \textbf{97.97}\% & 98.06\% &  \textbf{98.00}\% & 97.81\%\\
 \hline
 \multirow{3}{4em}{Vector to label} &
   CNN   & \textbf{99.45\%} & 14.15\% & \textbf{99.20}\%& 28.92\%& 98.69\%\\
 & sCNN & 99.18\% & 99.20\%& 99.01\%& 98.99\%& 99.13\%\\
 & GsCNN & 99.34\%& \textbf{99.32}\% & 99.19\% & \textbf{99.22}\% & \textbf{99.22}\%\\
 \hline
\end{tabular}
\end{center}
\caption{Performance comparison in MNIST. Reported accuracies. NR="Not Rotated", RB="Rotations about the $Y$-axis only", RF="Rotations with Full parameters $Z(\alpha)Y(\beta)Z(\gamma)$"}
\label{table:MNIST_table_results}
\end{table}

\subsection{Ablation study}

The aim of this section is to discuss and compare the impact on performance of the different architecture choices of sCNNs and GsCNNs, in particular regarding the choice of the hidden layers spin-equivariance and activation functions. The reader will see that while GsCNNs achieve a good performance out of the box regardless of several architecture choices, sCNNs tend to be not as expressive whenever the activation function and the hidden layers spin-equivariance do not mix information of channels of different datatypes (scalar fields and vector fields). We also provide supporting empirical evidence on the use of UNets compared to flat architectures.

\paragraph{Activation function.}
An important design choice in $\SO(3)$-equivariant network is which activation function to employ, especially in sCNNs. As group-convolution in Fourier domain does not mix information coming from frequencies of different degrees, this operation is performed by a nonlinearity applied in the spatial domain. However, in the spatial domain \textit{signal-equivariance} is not necessarily preserved by all nonlinearities: in spin-0 channels any activation function preserves signal-equivariance, while in spin-1 channels only rescalings of the magnitude preserve the intrinsic spin structure. While all activation functions applied to channels separately do not mix information coming from channels of different spin-equivariance, nonlinearities such as the phase‑collapse as introduced in \cite{esteves2023scaling} intentionally mix spin‑1 information into spin‑0 channels. Empirically, allowing spin-mixing nonlinearities leads to more expressive transformations and consistently better performance, while methods that enforce strict per‑spin separation (e.g., using only magnitude nonlinearities) tend to underperform. In a GsCNN spins are not hard-wired into the convolution: the network processes the full $\SO(3)$ Fourier representation, and spin structure matters only when we re‑impose it through the smoothing operator. This means that intermediate layers may freely mix spin components without violating equivariance of the final output rendering the choice of activation function less crucial.

\begin{figure}[H]
    \centering
    \includegraphics[trim={0mm 0mm 0mm 0mm},clip,width=0.48\textwidth]{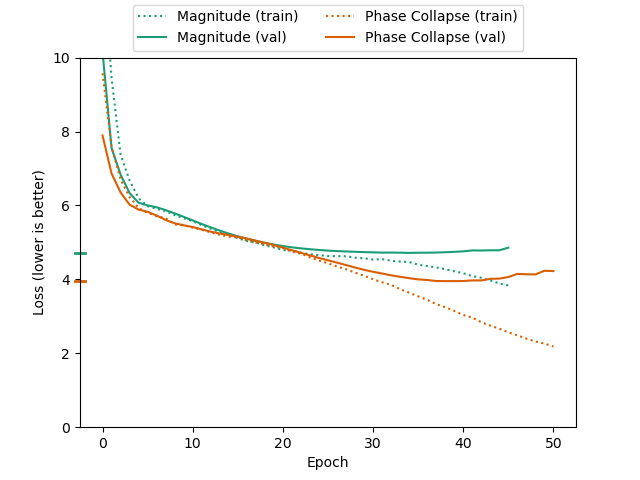}
    \hfil
    \includegraphics[trim={0mm 0mm 0mm 0mm},clip,width=0.48\textwidth]{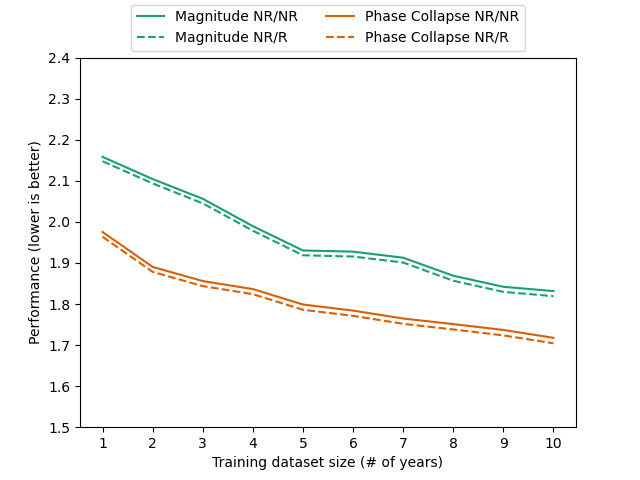}
    \caption{Comparison in training cycles between an sCNN with magnitude nonlinearity and an sCNN with phase collapse nonlinearity on the wind-to-wind experiment (dataset of dimension 1 year).}
\end{figure}

\paragraph{Spins-equivariance of hidden layers.}

We also compared architecture choices that restrict the hidden layers to a specific spin-equivariance at any given depth, against architectures that allow both representations to coexist and interact through nonlinearities that mix spins. Networks restricted to single‑spin processing exhibit a cleaner mathematical structure, especially when both input and output are within this space, but in practice are significantly less expressive: operations within the assigned spin subspace prevent the model from capturing interactions between scalar and vector components. Allowing both spin channels to propagate jointly in sCNNs, together with spin‑mixing activation functions, enables the network to learn richer intermediate representations, even if both input and output are of either spin-0 or spin-1. 

\begin{figure}[H]
    \centering
    \begin{subfigure}{0.8\textwidth}
        \centering
        \includegraphics[trim={0mm 42mm 0mm 42mm}, clip, width=\textwidth]{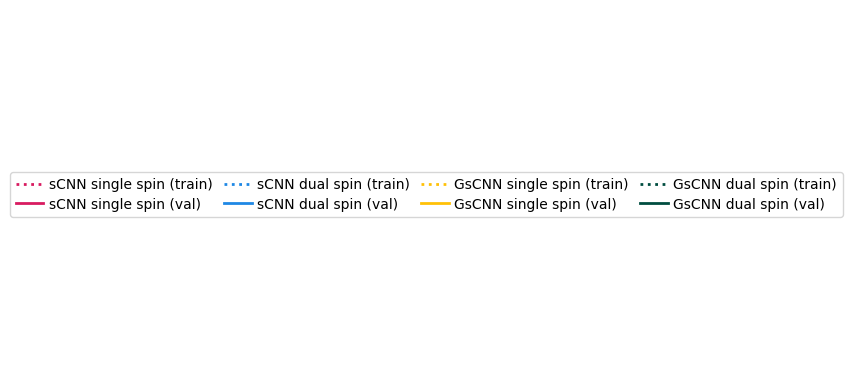}
    \end{subfigure}\\
    
    \begin{subfigure}[t]{0.48\textwidth}
        \centering
        \includegraphics[trim={15mm 5mm 25mm 15mm}, clip, width=\textwidth]{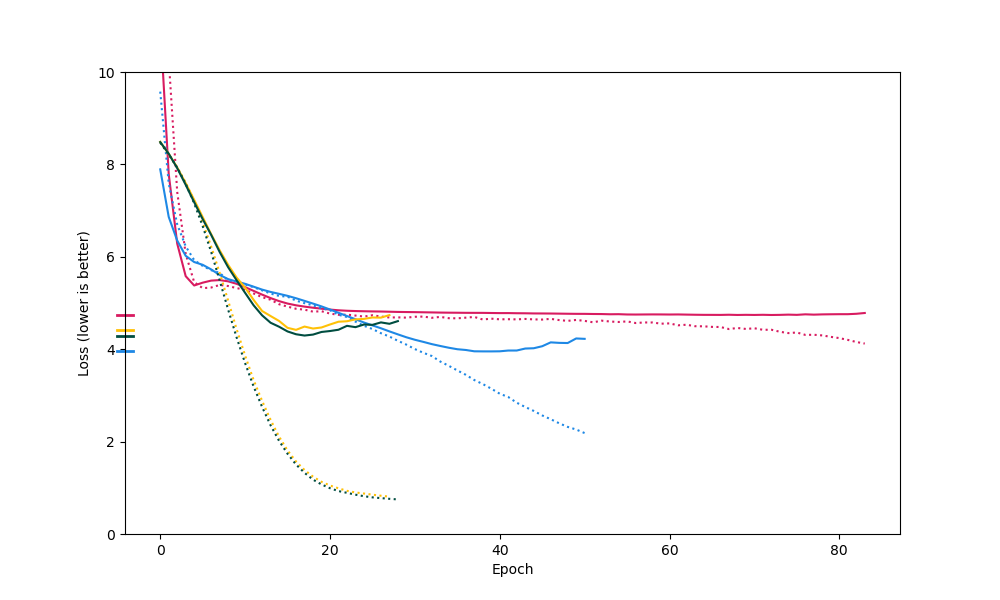}
        \caption{Training curves for wind‑to‑wind prediction (spin‑1 data).}
    \end{subfigure}
    \hfil
    \begin{subfigure}[t]{0.48\textwidth}
        \centering
        \includegraphics[trim={15mm 5mm 25mm 15mm}, clip, width=\textwidth]{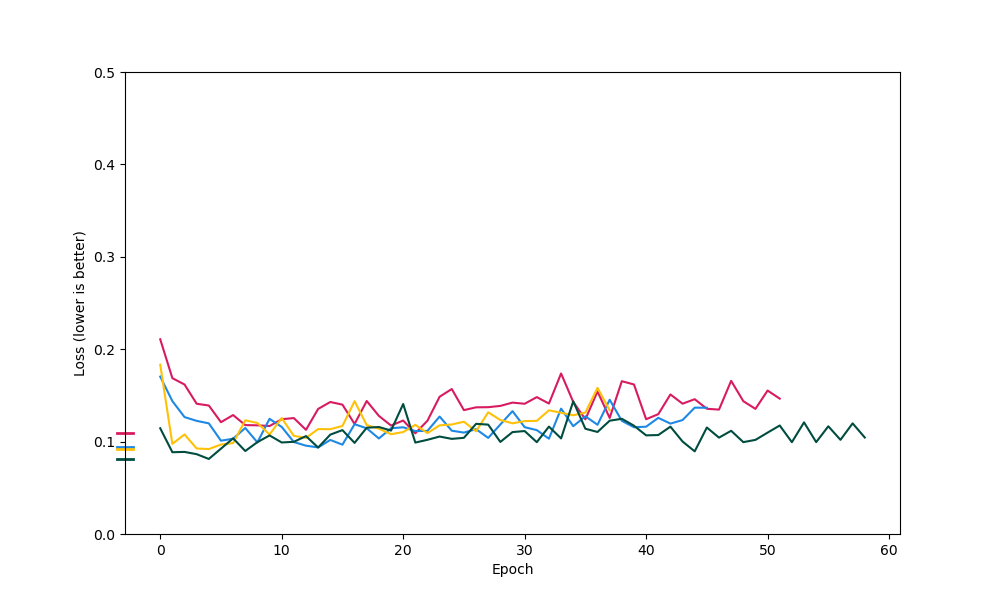}
        \caption{Training curves for spherical MNIST classification (spin‑0 data + classification head).}
    \end{subfigure}
    \caption{Comparison between single‑spin and dual‑spin architectures in sCNNs (phase-collapse) and in GsCNNs, when both input and output lie in the same class of signals: spin‑1 in (a) and spin‑0 in (b). Allowing both spin‑0 and spin‑1 channels in the hidden layers consistently leads to faster convergence and lower validation loss, demonstrating the the benefit of intermediate spin-mixing. GsCNNs which naturally enable spin mixing without imposing hard constraints maintain strong performance across spin choices and reduce the performance gap observed in sCNNs under single‑spin restrictions.}
    \label{fig:single_spin_comparison}
\end{figure}

\paragraph{UNet vs flat architectures.}

\begin{figure}[H]
    \centering
    \includegraphics[trim={20mm 2mm 25mm 0mm},clip,width=0.7\textwidth]{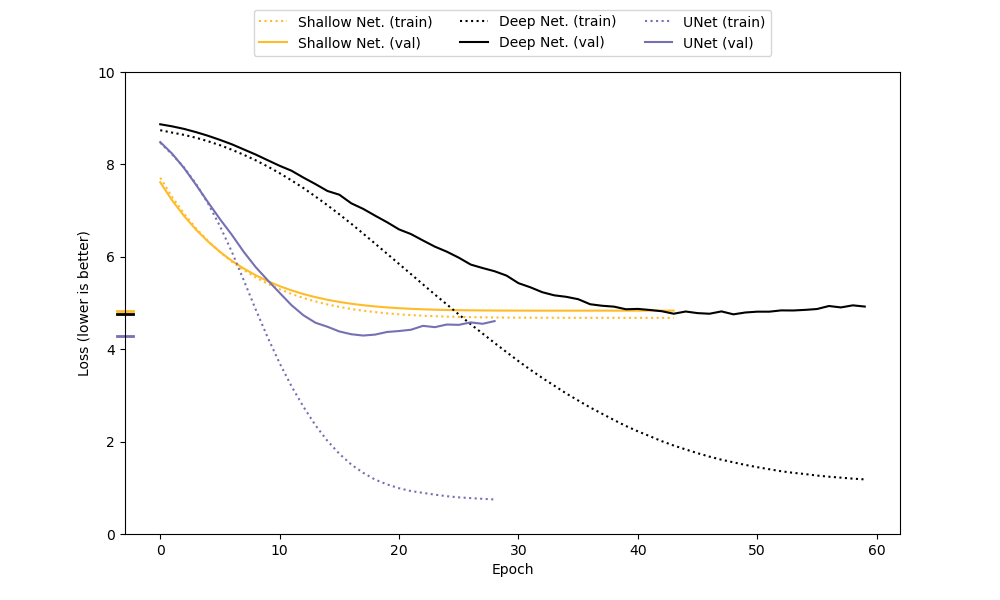}
    \caption{Comparison in training cycles between flat and UNet architectures on the wind-to-wind experiment (dataset of dimension 1 year). "Shallow" corresponds to a UNet with coarser layers removed, while "Deep" has an increased depth/length. UNet converges faster and to a lower loss. Final performance measured on the test dataset (lower is better, non rotated data) was 2.1629 for the flat networks (Deep) and 2.0635 for the UNet architecture.}
    \label{fig:unet_vs_flat}
\end{figure}

To assess the impact of multi-scale analysis on this class of models, we compare a flat sequence of $\SO(3)$-equivariant layers with a UNet‑style architecture incorporating spectral pooling (low-pass filter) and skip connections. Across all tasks, the UNet converges substantially faster and reaches a consistently lower validation loss compared to the flat model. This behavior mirrors what is commonly observed in Euclidean settings: the UNet’s encoder–decoder structure enables the network to integrate both coarse‑scale global information and fine‑scale local variations. Skip connections also preserve high‑frequency information that would otherwise be attenuated by repeated spectral pooling, while allowing more computationally inexpensive layers for the low-frequency information. In our experiments, these factors lead the UNet to not only learn more efficiently but also achieve better final accuracy, as evidenced by its lower test loss in the wind‑to‑wind prediction task as outlined in Figure \ref{fig:unet_vs_flat}.

\section{Conclusions}\label{sec:conclusions}

In this work, we introduced a general $\mathrm{SO}(3)$-equivariant neural network architecture for learning scalar and vector fields on the sphere. In contrast to existing spherical CNNs, which rely on hard architectural constraints, such as restricted convolution kernels or specialized nonlinearities, to preserve equivariance our approach performs full $\mathrm{SO}(3)$ group convolutions and restores the desired spin structure only after the nonlinearities by means of a smoothing operator. This design enables the use of flexible filters, general pointwise activation functions, dropout, normalization layers, and UNet-style multiscale architectures, while still ensuring both signal- and model-equivariance.

Through experiments on both synthetic (Spherical MNIST) and real-world geophysical data\-sets (ERA5 wind and temperature fields), we demonstrated that the additional expressivity afforded by full $\mathrm{SO}(3)$ convolutions leads to improved predictive accuracy and more robust generalization to arbitrary rotations, even without data augmentation. In settings where existing spin-weighted spherical CNNs struggle, particularly when restricted to single-spin channels or equivariance-preserving nonlinearities, our model maintains strong performance. The empirical results also highlight the importance of spin mixing in the hidden layers, the benefits of UNet-style architectures for multiscale spherical data, and the capability of our method to handle both scalar-to-vector and vector-to-vector prediction tasks in a unified framework.

Overall, our work shows that relaxing intermediate spin constraints while enforcing equivariance only at the output level yields a substantially richer hypothesis class which leads to good performance out of the box, at the expense of a higher computational cost. 

\bibliographystyle{abbrv}
\bibliography{Bibliography}

@article{Shen2019Nov,
	author = {Shen, Wen and Zhang, Binbin and Huang, Shikun and Wei, Zhihua and Zhang, Quanshi},
	title = {{3D-Rotation-Equivariant Quaternion Neural Networks}},
	journal = {arXiv},
	year = {2019},
	month = nov,
	eprint = {1911.09040},
	doi = {10.48550/arXiv.1911.09040}
}

@article{kingma2014adam,
  title={Adam: A Method for Stochastic Optimization},
  author={Diederik P. Kingma and Jimmy Ba},
  journal={CoRR},
  year={2014},
  volume={abs/1412.6980},
  url={https://api.semanticscholar.org/CorpusID:6628106}
}

@article{thomas2018tensor,
  title={Tensor field networks: Rotation-and translation-equivariant neural networks for 3d point clouds},
  author={Thomas, Nathaniel and Smidt, Tess and Kearnes, Steven and Yang, Lusann and Li, Li and Kohlhoff, Kai and Riley, Patrick},
  journal={arXiv preprint arXiv:1802.08219},
  year={2018}
}

@article{cohen2018spherical,
  title={Spherical {CNN}s},
  author={Cohen, Taco S and Geiger, Mario and K{\"o}hler, Jonas and Welling, Max},
  journal={Proceedings of The 6th International Conference on Learning Representations},
  year={2018}
}

@inproceedings{esteves2018learning,
  title={Learning {$\SO(3)$} equivariant representations with spherical {CNNs}},
  author={Esteves, Carlos and Allen-Blanchette, Christine and Makadia, Ameesh and Daniilidis, Kostas},
  booktitle={Proceedings of the European Conference on Computer Vision (ECCV)},
  pages={52--68},
  year={2018}
}

@article{kostelec2008ffts,
  title={{FFT}s on the rotation group},
  author={Kostelec, Peter J and Rockmore, Daniel N},
  journal={Journal of Fourier analysis and applications},
  volume={14},
  pages={145--179},
  year={2008},
  publisher={Springer}
}

@book{folland1989harmonic,
 author = {Folland, Gerald B.},
	title = {Harmonic Analysis in Phase Space},
	year = {1989},
	month = mar,
	isbn = {978-0-69108528-9},
	publisher = {Princeton University Press},
	address = {Princeton, NJ, USA},
}

@article{esteves2020spin,
  title={Spin-weighted spherical {CNN}s},
  author={Esteves, Carlos and Makadia, Ameesh and Daniilidis, Kostas},
  journal={Advances in Neural Information Processing Systems},
  volume={33},
  pages={8614--8625},
  year={2020}
}

@book{hansen1988spherical,
  title={Spherical near-field antenna measurements},
  author={Hansen, Jesper E},
  volume={26},
  year={1988},
  publisher={Iet}
}

@article{feng2015high,
  title={High-precision evaluation of {W}igner's d matrix by exact diagonalization},
  author={Feng, XM and Wang, P and Yang, W and Jin, GR},
  journal={Physical Review E},
  volume={92},
  number={4},
  pages={043307},
  year={2015},
  publisher={APS}
}

@article{risbo1996fourier,
  title={Fourier transform summation of {L}egendre series and {D}-functions},
  author={Risbo, Torben},
  journal={Journal of Geodesy},
  volume={70},
  pages={383--396},
  year={1996},
  publisher={Springer}
}

@article{milnor1978analytic,
  title={Analytic proofs of the “hairy ball theorem” and the {B}rouwer fixed point theorem},
  author={Milnor, John},
  journal={The American Mathematical Monthly},
  volume={85},
  number={7},
  pages={521--524},
  year={1978},
  publisher={Taylor \& Francis}
}

@book{knapp1996lie,
  title={Lie groups beyond an introduction},
  author={Knapp, Anthony William},
  volume={140},
  year={1996},
  publisher={Springer}
}

@article{bietti2021sample,
  title={On the sample complexity of learning under geometric stability},
  author={Bietti, Alberto and Venturi, Luca and Bruna, Joan},
  journal={Advances in neural information processing systems},
  volume={34},
  pages={18673--18684},
  year={2021}
}

@inproceedings{kondor2018generalization,
  title={On the generalization of equivariance and convolution in neural networks to the action of compact groups},
  author={Kondor, Risi and Trivedi, Shubhendu},
  booktitle={International conference on machine learning},
  pages={2747--2755},
  year={2018},
  organization={PMLR}
}

@article{cohen2019general,
  title={A general theory of equivariant {CNN}s on homogeneous spaces},
  author={Cohen, Taco S and Geiger, Mario and Weiler, Maurice},
  journal={Advances in neural information processing systems},
  volume={32},
  year={2019}
}

@article{aronsson2022homogeneous,
  title={Homogeneous vector bundles and {G}-equivariant convolutional neural networks},
  author={Aronsson, Jimmy},
  journal={Sampling Theory, Signal Processing, and Data Analysis},
  volume={20},
  number={2},
  pages={10},
  year={2022},
  publisher={Springer}
}

@inproceedings{bekkers2020bspline,
title={B-Spline {CNN}s on Lie groups},
author={Erik J Bekkers},
booktitle={International Conference on Learning Representations},
year={2020},
url={https://openreview.net/forum?id=H1gBhkBFDH}
}

@article{bronstein2021geometric,
  title={Geometric deep learning: Grids, groups, graphs, geodesics, and gauges},
  author={Bronstein, Michael M and Bruna, Joan and Cohen, Taco and Veli{\v{c}}kovi{\'c}, Petar},
  journal={arXiv preprint arXiv:2104.13478},
  year={2021}
}

@phdthesis{cohen2021equivariant,
  title={Equivariant convolutional networks},
  author={Cohen, Taco},
  year={2021},
  school={University of Amsterdam}
}

@article{lam2023learning,
  title={Learning skillful medium-range global weather forecasting},
  author={Lam, Remi and Sanchez-Gonzalez, Alvaro and Willson, Matthew and Wirnsberger, Peter and Fortunato, Meire and Alet, Ferran and Ravuri, Suman and Ewalds, Timo and Eaton-Rosen, Zach and Hu, Weihua and others},
  journal={Science},
  volume={382},
  number={6677},
  pages={1416--1421},
  year={2023},
  publisher={American Association for the Advancement of Science}
}

@article{price2023gencast,
  title={Gencast: Diffusion-based ensemble forecasting for medium-range weather},
  author={Price, Ilan and Sanchez-Gonzalez, Alvaro and Alet, Ferran and Andersson, Tom R and El-Kadi, Andrew and Masters, Dominic and Ewalds, Timo and Stott, Jacklynn and Mohamed, Shakir and Battaglia, Peter and others},
  journal={arXiv preprint arXiv:2312.15796},
  year={2023}
}

@book{sharpe2000differential,
  title={Differential geometry: Cartan's generalization of Klein's Erlangen program},
  author={Sharpe, Richard W},
  volume={166},
  year={2000},
  publisher={Springer Science \& Business Media}
}

@article{Hersbach2020Jul,
	author = {Hersbach, Hans and Bell, Bill and Berrisford, Paul and Hirahara, Shoji and Hor{\ifmmode\acute{a}\else\'{a}\fi}nyi, Andr{\ifmmode\acute{a}\else\'{a}\fi}s and Mu{\ifmmode\tilde{n}\else\~{n}\fi}oz-Sabater, Joaqu{\ifmmode\acute{\imath}\else\'{\i}\fi}n and Nicolas, Julien and Peubey, Carole and Radu, Raluca and Schepers, Dinand and Simmons, Adrian and Soci, Cornel and Abdalla, Saleh and Abellan, Xavier and Balsamo, Gianpaolo and Bechtold, Peter and Biavati, Gionata and Bidlot, Jean and Bonavita, Massimo and De Chiara, Giovanna and Dahlgren, Per and Dee, Dick and Diamantakis, Michail and Dragani, Rossana and Flemming, Johannes and Forbes, Richard and Fuentes, Manuel and Geer, Alan and Haimberger, Leo and Healy, Sean and Hogan, Robin J. and H{\ifmmode\acute{o}\else\'{o}\fi}lm, El{\ifmmode\acute{\imath}\else\'{\i}\fi}as and Janiskov{\ifmmode\acute{a}\else\'{a}\fi}, Marta and Keeley, Sarah and Laloyaux, Patrick and Lopez, Philippe and Lupu, Cristina and Radnoti, Gabor and de Rosnay, Patricia and Rozum, Iryna and Vamborg, Freja and Villaume, Sebastien and Th{\ifmmode\acute{e}\else\'{e}\fi}paut, Jean-No{\ifmmode\ddot{e}\else\"{e}\fi}l},
	title = {{The ERA5 global reanalysis}},
	journal = {Q. J. R. Meteorolog. Soc.},
	volume = {146},
	number = {730},
	pages = {1999--2049},
	year = {2020},
	month = jul,
	issn = {0035-9009},
	publisher = {John Wiley {\&} Sons, Ltd},
	doi = {10.1002/qj.3803}
}

@article{Ronneberger2015May,
	author = {Ronneberger, Olaf and Fischer, Philipp and Brox, Thomas},
	title = {{U-Net: Convolutional Networks for Biomedical Image Segmentation}},
	booktitle = {{Medical Image Computing and Computer-Assisted Intervention {\textendash} MICCAI 2015}},
	journal = {SpringerLink},
	pages = {234--241},
	year = {2015},
	month = nov,
	issn = {1611-3349},
	isbn = {978-3-319-24574-4},
	publisher = {Springer},
	address = {Cham, Switzerland},
	doi = {10.1007/978-3-319-24574-4_28}
}

@inproceedings{esteves2023scaling,
  title = 	 {Scaling Spherical {CNN}s},
  author =       {Esteves, Carlos and Slotine, Jean-Jacques and Makadia, Ameesh},
  booktitle = 	 {Proceedings of the 40th International Conference on Machine Learning},
  pages = 	 {9396--9411},
  year = 	 {2023},
  editor = 	 {Krause, Andreas and Brunskill, Emma and Cho, Kyunghyun and Engelhardt, Barbara and Sabato, Sivan and Scarlett, Jonathan},
  volume = 	 {202},
  series = 	 {Proceedings of Machine Learning Research},
  month = 	 {23--29 Jul},
  publisher =    {PMLR},
}

@article{huffenberger2010fast,
  title={Fast and exact spin-s spherical harmonic transforms},
  author={Huffenberger, Kevin M and Wandelt, Benjamin D},
  journal={The Astrophysical Journal Supplement Series},
  volume={189},
  number={2},
  pages={255},
  year={2010},
  publisher={IOP Publishing}
}

@article{shen2018approximations,
  title={Approximations on {SO(3)} by Wigner D-matrix and applications},
  author={Shen, Jie and Xu, Jie and Zhang, Pingwen},
  journal={Journal of Scientific Computing},
  volume={74},
  pages={1706--1724},
  year={2018},
  publisher={Springer}
}

@incollection{pytorch_package,
  title     = {PyTorch: An Imperative Style, High-Performance Deep Learning Library},
  author    = {Paszke, Adam and Gross, Sam and Massa, Francisco and Lerer, Adam and Bradbury, James and Chanan, Gregory and Killeen, Trevor and Lin, Zeming and Gimelshein, Natalia and Antiga, Luca and Desmaison, Alban and Kopf, Andreas and Yang, Edward and DeVito, Zachary and Raison, Martin and Tejani, Alykhan and Chilamkurthy, Sasank and Steiner, Benoit and Fang, Lu and Bai, Junjie and Chintala, Soumith},
  booktitle = {Advances in Neural Information Processing Systems 32},
  editor    = {H. Wallach and H. Larochelle and A. Beygelzimer and F. d’Alché-Buc and E. Fox and R. Garnett},
  pages     = {8024--8035},
  year      = {2019},
  publisher = {Curran Associates, Inc.},
  url       = {http://papers.neurips.cc/paper/9015-pytorch-an-imperative-style-high-performance-deep-learning-library.pdf}
}

@article{Gorski2005Apr,
	author = {G{\ifmmode\acute{o}\else\'{o}\fi}rski, K. M. and Hivon, E. and Banday, A. J. and Wandelt, B. D. and Hansen, F. K. and Reinecke, M. and Bartelmann, M.},
	title = {{HEALPix: A Framework for High-Resolution Discretization and Fast Analysis of Data Distributed on the Sphere}},
	journal = {Astrophys. J.},
	volume = {622},
	number = {2},
	pages = {759--771},
	year = {2005},
	month = apr,
	issn = {0004-637X},
	eprint = {arXiv:astro-ph/0409513},
	doi = {10.1086/427976}
}

@inproceedings{esteves2018polar,
  title     = {Polar Transformer Networks},
  author    = {Esteves, Carlos and Allen-Blanchette, Christine and Makadia, Akshay and Daniilidis, Kostas},
  booktitle = {International Conference on Learning Representations (ICLR) Workshop},
  year      = {2018},
  address   = {Vancouver, Canada},
  url       = {https://arxiv.org/abs/1806.03998}
}

@InProceedings{CohenWelling2016,
  title = 	 {Group Equivariant Convolutional Networks},
  author = 	 {Cohen, Taco and Welling, Max},
  booktitle = 	 {Proceedings of The 33rd International Conference on Machine Learning},
  pages = 	 {2990--2999},
  year = 	 {2016},
  editor = 	 {Balcan, Maria Florina and Weinberger, Kilian Q.},
  volume = 	 {48},
  series = 	 {Proceedings of Machine Learning Research},
  address = 	 {New York, New York, USA},
  month = 	 {20--22 Jun},
  publisher =    {PMLR},
  pdf = 	 {http://proceedings.mlr.press/v48/cohenc16.pdf},
  url = 	 {https://proceedings.mlr.press/v48/cohenc16.html},
  abstract = 	 {We introduce Group equivariant Convolutional Neural Networks (G-CNNs), a natural generalization of convolutional neural networks that reduces sample complexity by exploiting symmetries. G-CNNs use G-convolutions, a new type of layer that enjoys a substantially higher degree of weight sharing than regular convolution layers. G-convolutions increase the expressive capacity of the network without increasing the number of parameters. Group convolution layers are easy to use and can be implemented with negligible computational overhead for discrete groups generated by translations, reflections and rotations. G-CNNs achieve state of the art results on CIFAR10 and rotated MNIST.}
}

@article{Cohen2016Dec,
	author = {Cohen, Taco S. and Welling, Max},
	title = {{Steerable CNNs}},
	journal = {arXiv},
	year = {2016},
	month = dec,
	eprint = {1612.08498},
	doi = {10.48550/arXiv.1612.08498}
}

@inproceedings{Weiler2019,
 author = {Weiler, Maurice and Cesa, Gabriele},
 booktitle = {Advances in Neural Information Processing Systems},
 editor = {H. Wallach and H. Larochelle and A. Beygelzimer and F. d\textquotesingle Alch\'{e}-Buc and E. Fox and R. Garnett},
 pages = {},
 publisher = {Curran Associates, Inc.},
 title = {General {E(2)}-Equivariant Steerable {CNN}s},
 url = {https://proceedings.neurips.cc/paper_files/paper/2019/file/45d6637b718d0f24a237069fe41b0db4-Paper.pdf},
 volume = {32},
 year = {2019}
}

@article{Weiler2018,
	author = {Weiler, Maurice and Geiger, Mario and Welling, Max and Boomsma, Wouter and Cohen, Taco S.},
	title = {{3D Steerable {CNN}s: Learning Rotationally Equivariant Features in Volumetric Data}},
	journal = {Advances in Neural Information Processing Systems},
	volume = {31},
	year = {2018},
	url = {https://papers.nips.cc/paper_files/paper/2018/hash/488e4104520c6aab692863cc1dba45af-Abstract.html}
}

@InProceedings{pmlr-v97-cohen19d,
  title = 	 {Gauge Equivariant Convolutional Networks and the Icosahedral {CNN}},
  author =       {Cohen, Taco and Weiler, Maurice and Kicanaoglu, Berkay and Welling, Max},
  booktitle = 	 {Proceedings of the 36th International Conference on Machine Learning},
  pages = 	 {1321--1330},
  year = 	 {2019},
  editor = 	 {Chaudhuri, Kamalika and Salakhutdinov, Ruslan},
  volume = 	 {97},
  series = 	 {Proceedings of Machine Learning Research},
  month = 	 {09--15 Jun},
  publisher =    {PMLR},
  pdf = 	 {http://proceedings.mlr.press/v97/cohen19d/cohen19d.pdf},
  url = 	 {https://proceedings.mlr.press/v97/cohen19d.html},
  abstract = 	 {The principle of equivariance to symmetry transformations enables a theoretically grounded approach to neural network architecture design. Equivariant networks have shown excellent performance and data efficiency on vision and medical imaging problems that exhibit symmetries. Here we show how this principle can be extended beyond global symmetries to local gauge transformations. This enables the development of a very general class of convolutional neural networks on manifolds that depend only on the intrinsic geometry, and which includes many popular methods from equivariant and geometric deep learning. We implement gauge equivariant CNNs for signals defined on the surface of the icosahedron, which provides a reasonable approximation of the sphere. By choosing to work with this very regular manifold, we are able to implement the gauge equivariant convolution using a single conv2d call, making it a highly scalable and practical alternative to Spherical CNNs. Using this method, we demonstrate substantial improvements over previous methods on the task of segmenting omnidirectional images and global climate patterns.}
}

@InProceedings{gerken2022equivariance,
  title = {Equivariance versus Augmentation for Spherical Images},
  author = {Gerken, Jan E. and Carlsson, Oscar and Linander, Hampus and Ohlsson, Fredrik and Petersson, Christoffer and Persson, Daniel},
  booktitle = {Proceedings of the 39th International Conference on Machine Learning},
  pages = {7401--7420},
  year = {2022},
  editor = {Chaudhuri, Kamalika and Jegelka, Stefanie and Song, Le and Szepesvari, Csaba and Niu, Gang and Sabato, Sivan},
  volume = {162},
  series = {Proceedings of Machine Learning Research},
  publisher = {PMLR},
  url = {https://proceedings.mlr.press/v162/gerken22a.html}
}

@InProceedings{carlsson2024healswin,
  title = {HEAL-SWIN: A Vision Transformer on the Sphere},
  author = {Carlsson, Oscar and Gerken, Jan E. and Linander, Hampus and Spies-Sayatz, Heiner and Ohlsson, Fredrik and Petersson, Christoffer and Persson, Daniel},
  booktitle = {Proceedings of the IEEE/CVF Conference on Computer Vision and Pattern Recognition (CVPR)},
  pages = {27303--27312},
  year = {2024}
}

@article{gerken2023gdlenn,
  title = {Geometric deep learning and equivariant neural networks},
  author = {Gerken, Jan E. and Aronsson, Jimmy and Carlsson, Oscar and Linander, Hampus and Ohlsson, Fredrik and Petersson, Christoffer and Persson, Daniel},
  journal = {Artificial Intelligence Review},
  volume = {56},
  number = {12},
  pages = {14605--14662},
  year = {2023},
  doi = {10.1007/s10462-023-10502-7}
}

@article{linander2025pear,
  title = {PEAR: Equal Area Weather Forecasting on the Sphere},
  author = {Linander, Hampus and Petersson, Christoffer and Persson, Daniel and Gerken, Jan E.},
  journal = {arXiv preprint arXiv:2505.17720},
  year = {2025}
}

\appendix
\onecolumn

\section{Differential tensors on homogeneous spaces} \label{app:Equivariant}

\subsection{Reductive homogeneous spaces and tensors}\label{subsec:homogeneous}
We want to consider the case where our manifold is a reductive homogeneous space, see e.g. Chapter~5.6 in \cite{sharpe2000differential}. Let $G$ be a connected Lie group with a transitive action on a manifold $M$, making the latter a homogeneous space. If $H = G_{p_0}$ is the subgroup of $G$ fixing an arbitrary point $p_0 \in G$, then we can identify $M$ with the quotient $G/H$ and define a submersion $\pi:G \to M$ by $\pi(g) = g \cdot p_0$. Let $\mathfrak{h}$ and $\mathfrak{g}$ be the Lie algebras of $H$ and $G$ respectively. The kernel $\ker \pi_*$ of the differential of $\pi: G\to M$ is then given by left translation of $\mathfrak{h}$. We assume that $M$ is \emph{a reductive homogeneous space}, meaning that we assume that there is a decomposition $\mathfrak{g} = \mathfrak{m} \oplus \mathfrak{h}$, such that $\Ad(H) \mathfrak{m} \subseteq \mathfrak{m}$. It follows that we can consider $\rho(h)X =\Ad(h)X$, $h \in G$, $X \in \mathfrak{m}$ as a representation of $H$ on $\mathfrak{m}$. Furthermore, since $\pi_*|_{g \cdot \mathfrak{m}}: g \cdot \mathfrak{m} \to T_{\pi(g)} M$ is a bijective map, any vector field $\xi$ on $M$ can uniquely be described by a function $\bxi:G \to \mathfrak{m}$ satisfying
$$\pi_*(g \cdot \bxi(g)) = \xi(\pi(g)), \qquad \text{for any $g \in G$.}$$
By definition
$$\bxi(g \cdot h^{-1}) = \rho(h) \bxi(g), \qquad g\in G, h \in H,$$
and conversely, any such equivariant function corresponds to a vector field. In a similar way, any $\binom{i}{j}$-tensor $\tau$ on $M$ can be uniquely represented by a $\rho$-equivariant function $\boldsymbol{\tau}:G \to \mathfrak{m}^{*,\otimes i} \otimes \mathfrak{m}^{\otimes j}$ which is called \emph{associated} to $\tau$, where we also use $\rho$ for the representation of $H$ on $\mathfrak{m}^{*,\otimes i} \otimes \mathfrak{m}^{\otimes j}$ induced by its action on $\mathfrak{m}$. In other words, there exists a unique correspondence between differential tensors on $M$ and equivariant functions $\boldsymbol{\tau}:G \to \mathfrak{m}^{*,\otimes i} \otimes \mathfrak{m}^{\otimes j}$.

\subsection{Associated maps on the sphere} \label{subsec:associated_maps}
We can see the sphere $S^2$ as a reductive homogeneous space of the group $\SO(3)$ of $3\times 3$ orthogonal matrices with determinant 1. Such matrices can be written as $A = (A_1, A_2, A_3)$, where the listed columns form a positively oriented orthonormal basis of $\mathbb{R}^3$.
The Lie algebra $\so(3)$ of $\SO(3)$ is spanned by matrices
$$X = \begin{pmatrix}
0 & 0 & 0 \\ 0 & 0 & -1 \\ 0 & 1 & 0 \end{pmatrix}
\qquad
Y = \begin{pmatrix}
0 & 0 & 1 \\ 0 & 0 & 0 \\ -1 & 0 & 0 \end{pmatrix}
\qquad
Z = \begin{pmatrix}
0 & -1 & 0 \\ 1 & 0 & 0 \\ 0 & 0 & 0 \end{pmatrix},
$$
corresponding to infinitesimal rotations about the $x$, $y$ and $z$-axis respectively. With a slight abuse of notation, for $\theta \in \mathbb{R}$, we will also use $Z(\theta) = e^{\theta Z}$ for the rotation matrices themselves and define $X(\theta)$ and $Y(\theta)$ similarly.

We consider the sphere $S^2 \subseteq \mathbb{R}^3$ of unit vectors in the 3-dimensional Euclidean space. We then have a transitive action by $\SO(3)$ on $S^2$ since $Ap$ is a unit vector whenever $p$ is a unit vector and $A\in \SO(3)$. Define a projection $\pi: \SO(3) \to S^2$ by $\pi(A) = A e_z = A_3$, where the matrix $A$ has columns $(A_1, A_2, A_3)$ and where $e_z$ is the unit vector in the $z$-direction.  Observe that  $\pi(AZ(\theta)) = \pi(A)$, so $\pi$ is invariant under the right-action of $H= \{ Z(\theta) \}_{\theta \in \mathbb{R}}$. The Lie algebra of $H$ is $\mathfrak{h} = \spn \{ Z\}$, while $\mathfrak{m} = \spn \{ X,Y\}$ is an invariant complement, which we can identify with the complex numbers $\mathbb{C}$ with $H$ acting by rotation.

\begin{example}[Functions] \label{proof:equiv_functions}
If $f:S^2 \to \mathbb{R}$ is a real function, the associated function is $\mathbf{f} = f\circ \pi$. Any such function satisfies invariance condition $\mathbf{f}(AZ(\theta)) = \mathbf{f}(A)$ for $A \in \SO(3)$, $\theta \in \mathbb{R}$. Conversely, for any such $H$-invariant function $\mathbf{f}:\SO(3) \to \mathbb{R}$ is associated with a function on the sphere.
\end{example}

\begin{example}[Vector fields] \label{proof:equiv_vectors}
Let $\xi$ be a vector field on $S^2$. If $A =(A_1, A_2, A_3)$ and $\pi(A) = A_3$, then by definition $A_1, A_2$ is an othonormal basis of $T_{A_3} S^2$. We can hence associate $\xi$ with the function $\bxi: \SO(3) \to \mathbb{C}$ defined by
$$\bxi(A) = \langle A_1, \xi(A_3) \rangle + i \langle A_2, \xi(A_3) \rangle.$$
The associated function satisfies equivariance condition
$$\bxi(A \cdot Z(\theta)) = e^{-i\theta} \bxi(A), \qquad A \in \SO(3),\quad\theta \in \mathbb{R}.$$
Conversely, any such equivariant map $\bxi:\SO(3)\to\mathbb{C}$ corresponds to a vector field defined by $$\xi(p) = A\begin{pmatrix}\Re(\bxi(A))\\ \Im(\bxi(A))\\0\end{pmatrix},
\qquad p = A_3.$$
\end{example}

\begin{example}[Symmetric tensors] \label{proof:equiv_symmetric}
Let $\Sigma$ be a symmetric $\binom{l}{0}$-tensor on $S^2$. Let $\mathbb{V}_{l}$ be the space of real-valued polynomials of homogeneous degree $l$ in the variables $x$ and $y$. Then we can describe the tensor $\Sigma$ as a map ${\boldsymbol \Sigma}:\SO(3) \to \mathbb{V}_l$ with
${\boldsymbol \Sigma}(A)(x,y) = \Sigma(xA_1 + yA_2, \dots, xA_1 + yA_2).$
If we use complex notation $z = x+iy$, then we have equivariance conditions
${\boldsymbol \Sigma}(A \cdot Z(-\theta))(z) = {\boldsymbol \Sigma}(A)(e^{-i\theta} z),$
which uniquely define functions associated with symmetric tensors. If we look at coefficients, which for $l$ even is
$$\textstyle {\boldsymbol \Sigma}(A)(z) = c_0(A) |z|^l+ \sum_{m=1}^{l/2} \left(c_{2m}(A) z^{l/2+m} \bar{z}^{l/2-m} + \bar{c}_{2m}(A) z^{l/2-m} \bar{z}^{l/2+m}\right),$$
and for $l$ odd
$$\textstyle {\boldsymbol \Sigma}(A)(z) =  \sum_{m=0}^{(l-1)/2} \left(c_{2m+1}(A) z^{(l+1)/2+m} \bar{z}^{(l-1)/2-m} + \bar{c}_{2m+1}(A) z^{(l-1)/2-m} \bar{z}^{(l+1)/2+m} \right),$$
where $c_0$ is real, and $c_m(A \cdot Z(-\theta)) = e^{-im\theta} c_m(A)$.
\end{example}

% \subsection{Smoothing operator}\label{smoothing_operator}
% Let all the assumptions of \ref{subsec:homogeneous} apply, and additionally assume that $\mathbf{T}$ is a function on $G$ with values in a vector space $\mathcal{W}$. Assume furthermore that we have a representation $\rho_{\mathcal{W}}$ of $H$ on $\mathcal{W}$. We can then obtain a new function $\hat{\mathbf{T}} =\scrS(\mathbf{T})$ that satisfies the equivariance property $\hat{\mathbf{T}}(g \cdot h^{-1}) = \rho_{\calW}(h) \hat{\mathbf{T}}$ by
% $$\scrS(\mathbf{T})(u) = \int_{H} \rho_{\calW}(h) \mathbf{T}(u \cdot h) d\mu(h),$$
% where $\mu$ is the unit Haar measure on $H$. We call $\scrS$ \emph{a smoothing operator}, following terminology in \cite{bietti2021sample}.

\section{Fourier Analysis and convolutions} \label{app:Fourier}
\subsection{Fourier transform in Euler angles}

In our work we deal with both signals on $S^2$ and signals on $\SO(3)$, in both cases parametrized by Euler angles: $\alpha$ and $\beta$ for signals on $S^2$, and $\alpha$, $\beta$, $\gamma$ for signals on $\SO(3)$. 

We can decompose any matrix in $\SO(3)$ as
$$A = Z(\alpha) Y(\beta) Z(\gamma), \qquad 0 \leq \alpha, \gamma \leq 2\pi, \quad 0 \leq \beta \leq \pi,$$
called the Euler $ZYZ$-representation; see, e.g., \cite{shen2018approximations} or Remark~\ref{re:Explicit} for an explicit formula. We note that
$$\pi(Z(\alpha) Y(\beta)Z(\gamma)) = p(\alpha, \beta),$$
where $p(\alpha,\beta)$ is the point on the sphere with longitude $\alpha$ and latitude $\beta$. In terms of the $ZYZ$-representation, the unit Haar measure $\mu$ on $\SO(3)$ is given by
\begin{equation} \label{HaarSO3} d\mu = \frac{1}{8\pi^2} \sin \beta d\alpha d\beta d\gamma.\end{equation}
We have the corresponding $L^2$-inner product of signals given by $\langle x, y \rangle_{L^2} = \int_{\SO(3)} x \bar{y} \, d\mu$. Consider the space $\calX =L^2(\SO(3), \mathbb{C})$ of complex-valued square-integrable functions. We define a left-action and a right-action of $\SO(3)$ on $\calX$ by $(\ell_Bx)(A) =x(B^{-1} \cdot A)$ and $(r_B x)(A) = x(A \cdot B^{-1})$. 
Consider the subspace $\calX_{n} \subseteq \calX$ of functions satisfying $(r_{Z(\theta)} x) = e^{in\theta} x$.

In order to describe the $L^2$-space on $\SO(3)$, we will need to describe all its irreducible unitary representations. We follow, e.g., \cite{esteves2020spin}, the appendix in \cite{hansen1988spherical}, \cite{feng2015high}, \cite{risbo1996fourier}, and \cite{huffenberger2010fast} in this section. Consider the unitary representations $D^l:\SO(3) \to U(2l+1)$, $l =0, 1, \dots$. To describe these representations, we consider $(2l+1) \times (2l+1)$ matrices with entries indexed from $-l$ to $l$, defined by
$$\Lambda^l = \diag \{ m \}_{-l \leq m \leq l}, \qquad Q^l = ( q_m^l \delta_{m+1,n})_{-l\leq m,n\leq l}, \qquad q_n^l = \sqrt{(l-n)(l+n+1)}.$$
The representation $D^l$ is then defined by the conditions $D^l(AB) = D^l(A) D^l(B)$ and
$$D^l(Z(\alpha)) = e^{-i\alpha\Lambda^l}, \qquad D^l(Y(\beta)) = d^l(\beta) := e^{\beta (Q^l -(Q^{l})^\top)}.$$
Explicitly, the matrix entries are then given by 
$$D^l_{m,n}(Z(\alpha) Y(\beta) Z(\gamma)) = e^{-im\alpha-in\gamma} d_{m,n}^l(\beta),$$
where $d^l$ is a real, orthogonal matrix. Observe that $r_{Z(\theta)} D_{m,n}^l = e^{in\theta} D_{m,n}^l$. We also have orthogonality relations
$$\langle D_{m,n}^l, D_{b,c}^a \rangle_{L^2} = \frac{1}{2l+1} \delta_{l,a} \delta_{m,b} \delta_{n,c}.$$

Let $I$ be the collection of all triples $(l,m,n)$, with $l \geq 0$ and $m,n$ between $-l$ and $l$. Consider the Fourier expansion $x = \sum_I \hat x_{-m,-n}^l D_{m,n}^l$. We adopt the convention that the Fourier coefficient $\hat x_{m,n}^l$ corresponds to terms with $e^{im\alpha+in\gamma}d_{-m,-n}^l$, so as to maintain better correspondence with the usual Fourier transform for periodic functions.

\paragraph{"Full method" for FFT on $\SO(3)$}

Direct computation of the inner product $\langle f, D^l_{m,n} \rangle_{L^2}$ produces the following:

\begin{align*}
& \frac{1}{2l+1} \hat f_{m,n}^l = \langle f, D^l_{m,n} \rangle_{L^2} \\
& = \int_{\gamma\in[0,2\pi]}\int_{\beta\in[0,\pi]}\int_{\alpha\in[0,2\pi]} \frac{\sin \beta}{8\pi^2} f(Z(\alpha)Y(\beta) Z(\gamma)) e^{in\gamma+im\alpha} d_{m,n}^l(\beta) d\alpha d\beta d\gamma \\
& = \int_{0}^{\pi} \frac{\sin \beta}{2} d_{m,n}^l(\beta) \left(\frac{1}{2\pi} \int_0^{2\pi} f(Z(\alpha) Y(\beta)) e^{im\alpha} d\alpha  \right) d\beta \\
& = \frac{1}{2} \int_0^{\pi} \sin(\beta) d_{m,n}^l(\beta) f_{-m}^{\hat \alpha}(\beta)  \, d\beta.
\end{align*}

This yields a direct formula for the Fourier coefficients of a signal $f$, where the last identity follows from taking the Fourier transform of $f$ in the $\alpha$ coordinate, where $m$ is the discrete index corresponding to the angle $\beta$ following a discrete Fourier transform on that coordinate. Such step can be performed efficiently using the Fast Fourier Transform. The remaining integral can be computed by numerical quadrature.

In practice, Wigner $d$-matrices can be precomputed once and stored in memory. In particular, when dealing only with signals in $\calX_0$ or $\calX_1$, only the corresponding column $n=0,1$ is needed. For general signals, however, the memory occupied by the precomputed matrices can still be quite large. It is therefore possible to employ a trick that reduces memory at the cost of one extra index contraction, trading speed for reduced memory usage, which we introduce as the "Delta method".

\paragraph{"Delta method" for FFT on $\SO(3)$}

We can take advantage of the identity
$$Y(\beta) = Z(-\pi/2) Y(-\pi/2) Z(\beta) Y(\pi/2) Z(\pi/2),$$
to obtain a diagonalization of $d^l$. If $C^l = D^l(Z(\pi/2)) = \diag \{ i^{-m}\}_{-l\leq m \leq l}$ and $\Delta^l = D^l(Y(\pi/2)) = d^l(\pi/2)$, then
\begin{align*}
    d^l(\beta) & =C^{l,*} \Delta^{l,*} e^{-i\beta \Lambda^l} \Delta^l C^l =C^{l,*} \Delta^{l,*} \diag\{ e^{-im\beta}\}_{-l\leq m \leq l} \Delta^l C^l \\
    & =C^{l} \Delta^{l,*} e^{i\beta \Lambda^l} \Delta^l C^{l,*} =C^{l} \Delta^{l,*} \diag\{ e^{im\beta}\}_{-l\leq m \leq l} \Delta^l C^{l,*}, 
\end{align*}
where we have used that $d^l$ is real. Explicitly,
$$d^l_{m,n}(\beta) =i^{n-m} \sum_{s=-l}^l \Delta_{s,m}^l \Delta_{s,n}^l e^{is\beta} = i^{m-n} \sum_{s=-l}^l \Delta_{s,m}^l \Delta_{s,n}^l e^{-is\beta}.$$
We present the following way of computing the Fourier transform and the inverse Fourier transform on $\SO(3)$, by using the usual Fast Fourier Transform and its inverse.
\begin{lemma} \label{LemmaNovel}
Let $Y(\beta)$ denote the matrix corresponding to positive rotation around the $y$-axis by an angle $\beta$. Define a matrix $\Delta^l = D^l(Y(\frac{\pi}{2}))$.
\begin{enumerate}[\rm (a)]
\item For any $x \in \calX$, define a periodic function $x^{ext}: \mathbb{R}^3/(2\pi \mathbb{Z})^3 \to \mathbb{C}$
$$x^{ext}(\alpha,\beta,\gamma) = \left\{ \begin{array}{cc}
x(Z(\alpha) Y(\beta) Z(\gamma)) \cdot \sin \beta & \text{if $0 \leq \beta \leq \pi$,} \\
0 & \text{if $\pi < \beta < 2\pi$,}
\end{array}
\right.$$
and write $x^{ext} = \sum_{m,s,n=-\infty}^\infty \hat x_{m,s,n}^{ext} e^{i(m\alpha+s\beta+n\gamma)}$ for its (usual) Fourier coefficients. Then
$$\hat x_{m,n}^l = (-1)^{m+n} i^{m-n} \pi \cdot (2l+1)  \sum_{s=-l}^l   \Delta_{s,m}^l \Delta_{s,n}^l  x^{ext}_{m,s,n}.$$
\item Conversely, if $\scrF(x) = (\hat x_{m,n}^l)_I$ are Fourier coefficients, define a $[0,2\pi]^3$-periodic function $X$ by
\begin{align*}
X(\alpha,\beta,\gamma) & = \sum_{m,n,s=-\infty}^\infty (-1)^{m+n} i^{n-m} \left(\sum_{l=\max\{|m|,|s|,|n|\}}^\infty \hat x_{m,n}^l \Delta^l_{s,m} \Delta^l_{s,n} \right) e^{im\alpha +is\beta+in\gamma},
\end{align*}
then
$$X(\alpha,\beta, \gamma) = x(Z(\alpha) Y(\beta)Z(\gamma)),\qquad 0 \leq \alpha, \gamma \leq 2\pi, 0 \leq \beta \leq \pi.$$
\end{enumerate}
\end{lemma}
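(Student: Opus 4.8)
The plan is to prove both directions by expanding everything in terms of the Wigner D-matrix entries and matching coefficients of the exponentials $e^{i(m\alpha + s\beta + n\gamma)}$, using the diagonalization $d^l_{m,n}(\beta) = i^{n-m}\sum_{s} \Delta^l_{s,m}\Delta^l_{s,n} e^{is\beta}$ already established in the excerpt together with the Fourier expansion $x = \sum_I \hat x^l_{-m,-n} D^l_{m,n}$ and the explicit formula $D^l_{m,n}(Z(\alpha)Y(\beta)Z(\gamma)) = e^{-im\alpha - in\gamma} d^l_{m,n}(\beta)$.

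\medskip

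For part (b), I would start from the claimed series for $X(\alpha,\beta,\gamma)$ and substitute the diagonalization of $d^l$ "in reverse": the point is that $\sum_s i^{n-m}\Delta^l_{s,m}\Delta^l_{s,n} e^{is\beta} = d^l_{m,n}(\beta)$, so after interchanging the order of summation (justified by $L^2$-convergence / the fact that for fixed $(m,n)$ only finitely many $l$ contribute below any truncation, and the Wigner expansion converges in $L^2$) one gets $X(\alpha,\beta,\gamma) = \sum_{l,m,n} \hat x^l_{m,n} e^{im\alpha + in\gamma} d^l_{m,n}(\beta)$. Then I match this against $x(Z(\alpha)Y(\beta)Z(\gamma)) = \sum_I \hat x^l_{-m,-n} D^l_{m,n}(Z(\alpha)Y(\beta)Z(\gamma)) = \sum_I \hat x^l_{-m,-n} e^{-im\alpha - in\gamma} d^l_{m,n}(\beta)$, and re-index $m \to -m$, $n \to -n$, using $d^l_{-m,-n}(\beta) = d^l_{m,n}(\beta)$ (reality plus the symmetry of $d^l$), to see the two expressions agree. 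I would be careful that the inner sum in (b) starts at $l = \max\{|m|,|s|,|n|\}$ precisely because $\Delta^l_{s,m}$ is only defined for $|s|,|m| \le l$, so the restriction is automatic and not an extra hypothesis.

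\medskip

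For part (a), the strategy is to compute the ordinary 3D Fourier coefficients of $x^{ext}$ directly. Write $x^{ext}(\alpha,\beta,\gamma) = \sin\beta \cdot x(Z(\alpha)Y(\beta)Z(\gamma))$ on $\beta \in [0,\pi]$ and zero on $(\pi, 2\pi)$; expand $x(Z(\alpha)Y(\beta)Z(\gamma))$ in D-matrices as above; and express $\sin\beta \cdot d^l_{m,n}(\beta)$ via the $e^{is\beta}$ expansion. The key analytic input is the elementary integral $\frac{1}{2\pi}\int_0^{2\pi} (\text{the zero-padded } \sin\beta \, e^{is\beta}) e^{-is'\beta}\, d\beta$, i.e. one needs the usual Fourier coefficients of the function equal to $\sin\beta$ on $[0,\pi]$ and $0$ on $[\pi,2\pi]$; together with the $\alpha$- and $\gamma$-integrals which just pick out the matching exponential indices. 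Collecting the constants $(-1)^{m+n}$, $i^{m-n}$, the factor $\pi$ and the $(2l+1)$ then has to reproduce the stated formula; this bookkeeping, and in particular getting the sign/phase factors from the zero-padding of $\sin\beta$ correct, is the part most prone to error. I would then invert the relation to solve for $\hat x^l_{m,n}$ in terms of the $x^{ext}_{m,s,n}$, noting that the matrix $(\Delta^l_{s,m})_s$ appearing is orthogonal so the "inversion" over $s$ is really just the same formula read the other way — which is exactly what makes (a) the clean converse of (b).

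\medskip

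The main obstacle I anticipate is not conceptual but organizational: keeping the index conventions consistent throughout — the paper's nonstandard negative-index convention $\hat x \leftrightarrow \hat x_{-m,-n}$, the indexing of $\Delta^l$, and the phase factors $i^{\pm(m-n)}$ and $(-1)^{m+n}$ that arise from the $\sin\beta$ zero-padding and from $C^l = \diag\{i^{-m}\}$. I would handle this by fixing, once and for all at the start of the proof, the expansion $x(Z(\alpha)Y(\beta)Z(\gamma)) = \sum_{l}\sum_{|m|,|n|\le l} \hat x^l_{m,n}\, e^{im\alpha+in\gamma}\, d^l_{-m,-n}(\beta)$ (the convention already adopted in the excerpt), substituting the diagonalization of $d^l$, and then doing the three one-dimensional Fourier integrals in sequence, so that each phase factor is introduced in a controlled, traceable way. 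Convergence issues (interchanging $\sum_l$ with the $\beta$-integral, and with $\sum_s$) are mild and can be dispatched by a standard density/$L^2$ argument, since band-limited signals are dense and the claimed formulas are continuous in the appropriate topologies.
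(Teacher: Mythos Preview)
Your plan for part~(b) is essentially the paper's: substitute the diagonalization $d^l_{m,n}(\beta)=i^{n-m}\sum_s\Delta^l_{s,m}\Delta^l_{s,n}e^{is\beta}$ into the Wigner-$D$ expansion of $x$ and read off the result. (One caution: the identity $d^l_{-m,-n}(\beta)=d^l_{m,n}(\beta)$ you quote is off by a factor $(-1)^{m-n}$; this sign is precisely what produces the $(-1)^{m+n}$ in the statement, so track it rather than suppress it.)

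Your plan for part~(a), however, has a genuine gap. You propose to first compute $\hat x^{ext}_{m,s,n}$ in terms of the $\hat x^l_{m,n}$ by expanding $x$ in $D$-matrices and then integrating the zero-padded $\sin\beta\cdot e^{is\beta}$ against $e^{-is'\beta}$. But that integral is \emph{not} $\delta_{s,s'}$: the Fourier coefficients of $\mathbf{1}_{[0,\pi]}(\beta)\sin\beta$ are supported on all even integers, so what you would actually obtain is
\[
\hat x^{ext}_{m,s',n}\;=\;\sum_{l\geq\max(|m|,|n|)}\hat x^l_{m,n}\,i^{m-n}\sum_{s=-l}^{l}\Delta^l_{s,-m}\Delta^l_{s,-n}\,K(s'-s),
\]
a convolution in $s$ coupled with a sum over $l$. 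The ``inversion'' you then invoke---orthogonality of the columns of $\Delta^l$ for fixed $l$---cannot undo this: there is no single $l$ to fix, and the kernel $K$ is not removed by any $\Delta^l$. So the step ``invert the relation to solve for $\hat x^l_{m,n}$'' does not go through as written.

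The paper sidesteps this by going in the opposite direction. It computes $\hat x^l_{m,n}$ \emph{directly} from the orthogonality relation
\[
\tfrac{1}{2l+1}\,\hat x^l_{m,n}=\langle x,\,D^l_{-m,-n}\rangle_{L^2},
\]
and the key observation is that the $\sin\beta$ in the Haar measure $d\mu=\tfrac{1}{8\pi^2}\sin\beta\,d\alpha\,d\beta\,d\gamma$ is exactly the weight already absorbed into the definition of $x^{ext}$. After substituting the $\Delta$-expansion for $d^l_{-m,-n}(\beta)$, the $\beta$-integral over $[0,\pi]$ becomes the full integral $\int_0^{2\pi} x^{ext}\,e^{-i(m\alpha+s\beta+n\gamma)}\,d\beta$ with no residual kernel, which is precisely $(2\pi)^3\hat x^{ext}_{m,s,n}$. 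The symmetry $\Delta^l_{-s,-m}=(-1)^{s-m}\Delta^l_{s,m}$ then supplies the factor $(-1)^{m+n}$. No inversion is required; this is the missing idea in your approach to~(a).
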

\begin{proof} We observe that
\begin{align*}
& \frac{1}{2l+1} \hat x^l_{m,n}  =  \langle x, D_{-m,-n}^l \rangle_{L^2} \\
& = \frac{1}{8\pi^2} \int_0^{2\pi} \int_{0}^{2\pi} \int_0^\pi
\sum_{s=-l}^l i^{m-n} \Delta_{-s,-m}^l \Delta_{-s,-n}^l x e^{-i(m\alpha+n\gamma + s\beta)}  \sin \beta d\beta d\gamma d\alpha \\
& = \frac{1}{8\pi^2} 
\sum_{s=-l}^l i^{m-n} \Delta_{-s,-m}^l \Delta_{-s,-n}^l \int_{[0,2\pi]^3} x^{ext} e^{-i(m\alpha+n\gamma + s\beta)}   d\beta d\gamma d\alpha \\
& = i^{m-n} \pi 
\sum_{s=-l}^l \Delta_{-s,-m}^l \Delta_{-s,-n}^l x^{ext}_{m,s,n} = (-1)^{m+n} i^{m-n} \pi 
\sum_{s=-l}^l \Delta_{s,m}^l \Delta_{s,n}^l x^{ext}_{m,s,n},
 \end{align*}
where we have used that $\Delta_{-s,-m}^l = (-1)^{s-m} \Delta_{s,m}^l$.

Conversely, since
\begin{align*}
x &= \sum_I \hat x_{m,n}^l D_{-m,-n}^l  \\
& = \sum_{m,n,s=-\infty}^\infty i^{n-m} \left(\sum_{l=\max\{|m|,|s|,|n|\}}^\infty  \hat x_{m,n}^l \Delta^l_{-s,-m} \Delta^l_{-s,-n} \right) e^{im\alpha + is\beta+in\gamma} \\
& = \sum_{m,n,s=-\infty}^\infty (-1)^{m+n} i^{n-m} \left(\sum_{l=\max\{|m|,|s|,|n|\}}^\infty  \hat x_{m,n}^l \Delta^l_{s,m} \Delta^l_{s,n} \right) e^{im\alpha + is\beta+in\gamma},
\end{align*}
we obtain the opposite transformation.
\end{proof}

% \begin{remark}[Further simplifications] Since we are developing our method for Fourier transforms for left and right covariances with a filter $\psi$ acting as the (complex) weights in a neural net, we can introduce further simplifications.
% Including $(2l+1) \cdot \pi$ into the weights, we could just use $\mathbf{R}_{m,s,n}^l = (-i)^{m-n} \Delta_{s,m} \Delta_{s,m}$ and
% \begin{align*}
% \tilde x_{m,n}^l & =  \sum_{s=-l}^l   \mathbf{R}^{l}_{m,s,n} x^{ext}_{m,s,n}. \\
% X(\alpha,\beta,\gamma) & = \sum_{m,n,s=-\infty}^\infty  \left(\sum_{l=\max\{|m|,|s|,|n|\}}^\infty \overline{\mathbf{R}^{l}_{m,s,n}} \tilde x_{m,n}^l  \right) e^{im\alpha +is\beta+in\gamma},
% \end{align*}
% If $x_{m,s,n}^{ext}$ is only non-zero for one value of $n$, such as $n =0$ or $n=1$ in our case, then by using a combination of redefining weights and cancellation of constants when taking FFT and then the inverse, we can just use $\mathbf{R}_{m,s,n}^l = \Delta_{s,m} \Delta_{s,m}$.
% \end{remark}

\begin{remark} \label{re:Explicit}
Any Euler's angles decomposition can be explicitly written in terms of $\alpha, \beta, \gamma$ as
{\footnotesize $$Z(\alpha) Y(\beta) Z(\gamma) = \begin{pmatrix}
\cos(\alpha) \cos(\beta) \cos(\gamma) - \sin(\alpha) \sin(\gamma)
& - \cos(\gamma) \sin(\alpha) - \cos(\alpha) \cos(\beta) \sin(\gamma) 
& \cos(\alpha) \sin(\beta) \\
\cos(\alpha) \sin(\gamma) + \cos(\beta) \cos (\gamma) \sin(\alpha)
& \cos(\alpha) \cos(\gamma) - \cos(\beta) \sin(\alpha) \sin(\gamma)
& \sin(\alpha) \sin(\beta) \\
- \cos(\gamma) \sin(\beta) & \sin(\beta) \sin(\gamma) & \cos(\beta) 
\end{pmatrix}.$$}
\end{remark}

\subsection{Convolutions}
We will use an approach similar to \cite{cohen2018spherical}, using conventions in \cite{kostelec2008ffts}. See also Section~5.4 in \cite{folland1989harmonic}.
Let $G$ be a compact Lie group which we will assume is \emph{unimodular}, meaning that its left and right Haar measure coincide. Introduce left and right-action on signals by
$$\ell_g(x)(h) = x(g^{-1} \cdot h), \qquad r_g(x)(h) = x(h \cdot g^{-1}), \qquad g,h \in G, x \in \mathcal{X}(G),$$
where $\mathcal{X}(G)$ are real or complex valued $L^2$ functions on $G$.
We define the group convolution $x *_G y$ as the signal $x *_G y(g) = \int_G x(h) \overline{y(h^{-1}g)} \, d\mu(h)$.
We also introduce left and right covariance defined by
$$x *_\ell y(g) = \langle x, \ell_g y \rangle_{L^2} = \langle \ell_{g^{-1}} x, y \rangle_{L^2}, \qquad x *_r y(g) = \langle x, r_g y \rangle_{L^2}.$$
related to the convolutions by $x*_\ell y = x *_G y_{-1}$ and $x *_r y = \overline{y_{-1} *_G x}$, where $y_{-1}(g) = y(g^{-1})$. We observe equivariance relations that
$(\ell_g x) *_\ell y = \ell_g (x *_\ell y)$,  $(r_g x) *_r y = r_g(x*_ry)$.
Next, if $x \in \calX(G, \calW)$ is a signal taking values in a real or complex vector space $\mathcal{W}$, and if $\psi \in \calX(G)$ is a respectively real or complex valued signal, then both $x *_\ell \psi$ and $x *_r \psi$ make sense as $\mathcal{W}$-valued functions. Assume that there is a representation $\rho$ of a closed subgroup $H \subseteq G$ on~$\calW$. If $x$ satisfies $r_h x = \rho(h^{-1}) x$,
we then verify that $r_h x *_r \psi = \rho(h) x *_r \psi$ satisfies the same equivariance property, with a similar relation holding for left covariance.

For the special case of $G = \SO(3)$, we can use the Fourier expansion to write the coefficients these convolutions.
\begin{lemma}
Consider complex-valued signals $x, \psi \in \calX = L^2(\SO(3), \mathbb{C})$ with Fourier coefficients $x = \sum_{I} \hat x_{-m,-n}^l D_{m,n}^l$ and $\psi = \sum_{I} \hat \psi_{-m,-n}^l D_{m,n}^l$. We have Fourier coefficients:
$$\widehat{x *_\ell \psi}_{m,n}^l = \frac{1}{2l+1} \sum_{s=-l}^l \hat x_{m,s}^l \overline{\hat \psi_{n,s}^l}.$$
\end{lemma}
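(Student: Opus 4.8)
The plan is to compute $x *_\ell \psi(A) = \langle x, \ell_A\psi\rangle_{L^2} = \int_{\SO(3)} x(B)\,\overline{\psi(A^{-1}B)}\,d\mu(B)$ directly in the Fourier basis $\{D^l_{m,n}\}$ and read off the coefficients. First I would substitute the Fourier expansions $x = \sum_I \hat x^l_{-m,-n} D^l_{m,n}$ and $\psi = \sum_I \hat\psi^l_{-m,-n} D^l_{m,n}$ from \eqref{FourierSO3Coef}; interchanging the sums with the integral is justified because it suffices to verify the identity for band-limited $x,\psi$ (where both sides are finite sums) and then pass to the $L^2$-limit. The one piece of structure that does the work is the homomorphism property $D^l(A^{-1}B) = D^l(A^{-1})D^l(B)$ together with unitarity $D^l(A^{-1}) = D^l(A)^*$, which combine to give $D^l_{b,c}(A^{-1}B) = \sum_{t=-l}^l \overline{D^l_{t,b}(A)}\,D^l_{t,c}(B)$.

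Next, after conjugating the $\psi$-term, I would collapse the $B$-integral using the orthogonality relations $\langle D^l_{m,n}, D^a_{b,c}\rangle_{L^2} = \frac{1}{2l+1}\delta_{l,a}\delta_{m,b}\delta_{n,c}$. This forces the representation labels to agree, identifies the row index of the $\psi$-matrix with $m$, and equates the column index $n$ of $x$ with the column index $c$ of $\psi$, leaving a single free summation index. The outcome is $x*_\ell\psi(A) = \sum_{l,m,b}\frac{1}{2l+1}\Big(\sum_{n=-l}^l \hat x^l_{-m,-n}\,\overline{\hat\psi^l_{-b,-n}}\Big)D^l_{m,b}(A)$; one then notes that $A\mapsto x*_\ell\psi(A)$ lies in $\calX$ (it is an absolutely convergent combination of coefficient functions of controlled $L^2$-mass), so it has a bona fide Fourier expansion to compare against.

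Finally I would match the last display with $x*_\ell\psi = \sum_I \widehat{x*_\ell\psi}^l_{-p,-q}D^l_{p,q}$, reading off $\widehat{x*_\ell\psi}^l_{-m,-b} = \frac{1}{2l+1}\sum_{n=-l}^l \hat x^l_{-m,-n}\,\overline{\hat\psi^l_{-b,-n}}$, and then relabel $-m\mapsto m$, $-b\mapsto n$, $-n\mapsto s$, which is harmless since the inner sum runs symmetrically over $-l,\dots,l$, to obtain the stated formula. I expect the only genuine obstacle to be bookkeeping: keeping the negative-index convention of \eqref{FourierSO3Coef}, the transpose swap in $\overline{D^l_{b,t}(A^{-1})} = D^l_{t,b}(A)$, and the placement of the complex conjugates mutually consistent; there is no analytic subtlety, the whole argument being a finite unwinding once the homomorphism and orthogonality properties are invoked. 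As a sanity check, if $x\in\calX_p$ then $\hat x^l_{m,s}$ is supported on $s=p$ and the formula collapses to $\widehat{x*_\ell\psi}^l_{m,n} = \frac{1}{2l+1}\hat x^l_{m,p}\,\overline{\hat\psi^l_{n,p}}$, matching the observation recorded just after the convolution formula in Section~\ref{subsec:layers}.
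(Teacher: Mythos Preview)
Your proposal is correct and follows essentially the same approach as the paper: expand both signals in the Wigner-$D$ basis, use the homomorphism property of $D^l$ to separate the $A$-dependence, and collapse the remaining integral via the orthogonality relations. The only cosmetic difference is that the paper shifts the left action onto $x$ via $\langle x,\ell_A\psi\rangle=\langle \ell_{A^{-1}}x,\psi\rangle$ before expanding, whereas you keep it on $\psi$ and invoke unitarity $D^l(A^{-1})=D^l(A)^*$ directly; the bookkeeping and the resulting relabeling are otherwise identical.
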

\begin{proof}
Since $D^l$ is a representation, we have $D^{l}_{m,n}(AB) =\sum_{s=-l}^l D_{m,s}^l(A) D_{s,n}^l(B)$. It follows that $\ell_{A^{-1}} D_{m,n}^l =\sum_{s=-l}^l D_{m,s}^l(A) D_{s,n}^l$.
Hence
\begin{align*}
    x *_\ell \psi(A) = \langle \ell_{A^{-1}} x, \psi \rangle & = \sum_{(l,m,n) \in I} \sum_{(a,b,c) \in I}  \sum_{s=-l}^l \hat x_{-m,-n}^l \overline{\hat \psi_{-b,-c}^a} D_{m,s}^l(A) \langle D_{s,n}^l, D_{b,c}^a \rangle_{L^2} \\
    & = \sum_{(l,m,n) \in I} \frac{1}{2l+1}  \sum_{s=-l}^l \hat x_{-m,s}^l \overline{\hat \psi_{-n,s}^l} D_{m,n}^l(A)  . \qedhere
\end{align*}
\end{proof}
Similarly, we have $\widehat{y *_r \psi}_{m,n}^l = \frac{1}{2l+1} \sum_{s=-l}^l \hat x_{s,n}^l \overline{\hat \psi_{s,m}^l}$. With slight abuse of terminology, we will use \emph{convolution} for left covariance, as we only use this latter operation in our network layers.
\begin{remark} \label{re:ConvIsAllYouNeed}
Under appropriate bounds, left/right equivariant maps $f: \calX(G,\calW) \to \calX(G, \calW)$ on $L^2$ spaces on unimodular Lie groups can be written as left/right covariance with a given filter $\psi \in \calX(G)$, see \cite{aronsson2022homogeneous} and also \cite{kondor2018generalization,cohen2019general}.
\end{remark}

\subsection{Interpretation of left- and right-action}
Let $\xi$ be a vector field on the sphere $S^2$ and let $\bxi$ be its associated function on $\SO(3)$. For a fixed $B\in \SO(3)$, consider first the function $\bxi_B(A) = \bxi(B^{-1}A)$. We observe that $\bxi_B \in \calX_1$, so it corresponds to a vector field.
\begin{align*} \xi_B(\pi(A)) & = \xi_{B}(A_3)= (\Re \bxi(B^{-1}A)) A_1 + (\Im \bxi(B^{-1}A)) A_2 \\
& = \langle B^{-1}A_1, \xi(B^{-1}A_3) \rangle A_1 + \langle B^{-1}A_2, \xi(B^{-1}A_3) \rangle A_2 = B\xi(B^{-1}A). \end{align*}
Hence, if we consider the isometry $h(p) = Bp$ of the sphere, then $\xi_B$ is the result of the action of this isometry on the vector field
$$\xi_B(p) = h_{*} \xi(h^{-1}(p)).$$
\begin{remark}
For $x \in \calX_1$ the right translation $r_Bx$ is generally not in $\calX_1$. However, we observe that $x *_r \psi \in \calX_1$ for any $\psi \in \calX$. If we use a rotationally equivariant non-linearity\footnote{such as $\sigma(z) = z \tanh |z|^2$ or $\sigma(re^{i\theta}) = e^{i\theta} \mathrm{ReLu}(r-b)$}, then we can build a neural network based on layers through the use of the right-action
$$L(x) = \sigma(x *_r \psi).$$
Such a network would map $\calX_1$ to $\calX_1$ in a natural way, but our initial experiments with such type of network produced poor results compared to the left invariant counterpart which we present.
\end{remark}

\section{Relation between signal-equivariance and spin-weighted spherical harmonics}\label{app:spin-weighted-relation}

In this appendix we clarify the link between the notion of signal $n$-equivariance on $\SO(3)$ and the classical theory of spherical harmonics and spin-weighted spherical harmonics. This connection provides additional intuition for why $n$-equivariant signals on $\SO(3)$ naturally describe scalar and vector fields on the sphere, and it makes explicit the relationship between our formalism and that used in the spin-weighted spherical CNN literature (e.g.~\cite{esteves2018learning,esteves2020spin}).

\subsection{Spherical harmonics.}
Classical (scalar) spherical harmonics form an orthonormal basis for square-integrable scalar functions on the sphere,

\[
L^2(S^2) = \mathrm{span}\{Y_\ell^m : \ell \ge 0,\ -\ell \le m \le \ell \},
\]

and arise as the matrix elements of the irreducible representations of $\SO(3)$ restricted to the stabilizer of the north pole. Concretely, if we write Euler angles as $A = Z(\alpha)Y(\beta)Z(\gamma)$, then the Wigner $D$-matrices satisfy

\[
D^\ell_{m,0}(A) = Y_\ell^m(\alpha,\beta),
\]

so spherical harmonics can be viewed as the \emph{right-invariant} coefficient functions $D^\ell_{m,0}$ on $\SO(3)$.   Equivalently, a scalar field $f : S^2 \to \mathbb{C}$ corresponds uniquely to a function $f : \SO(3)\to\mathbb{C}$ satisfying

\[
f(AZ(\theta)) = f(A), \qquad \theta\in\mathbb{R}.
\]

That is, scalar spherical signals are exactly $0$-equivariant signals in the sense of Section~\ref{sec:equiv-tensors}.

\subsection{Spin-weighted spherical harmonics.}
Spin-weighted spherical harmonics ${}_sY_\ell^m$ generalize the classical basis functions to fields transforming with a prescribed phase under rotations of the local tangent frame.  Their defining
transformation law is

\[
{}_sY_\ell^m(\alpha,\beta; \theta) 
= e^{\mathrm{i} s\theta}\, {}_sY_\ell^m(\alpha,\beta; 0),
\]

meaning that a spin-$s$ field acquires a phase $e^{\mathrm{i}s\theta}$ when the sphere is rotated around the local normal by angle $\theta$.   These functions arise naturally as the Wigner matrix elements $D^\ell_{m,s}$:

\[
{}_sY_\ell^m(\alpha,\beta) \;=\; (-1)^s \sqrt{\frac{2\ell+1}{4\pi}}\; D^\ell_{m,s}(Z(\alpha)Y(\beta)).
\]

Thus a spin-$s$ field on $S^2$ corresponds to the subspace of functions on $\SO(3)$ whose Fourier expansion involves precisely the $n=s$ column of each Wigner matrix.

\subsection{Signal-equivariance and spin.}
Recall from Section~2.1 that we call a function $x:\SO(3)\to\mathbb{C}$ \emph{$n$-equivariant} if

\[
r_{Z(\theta)}x = e^{in\theta}\,x,
\]

Comparing this with the defining property of spin-weighted harmonics, we obtain the precise correspondence $n$-equivariant signals on $\SO(3)$ and spin-$n$ fields on $S^2$. In particular:
\begin{itemize}
    \item Spin-$0$ fields (scalar functions) correspond to the subspace spanned by $D^\ell_{m,0}$,
          and hence to $0$-equivariant signals.
    \item Spin-$1$ fields (tangent vector fields) correspond to the subspace spanned by
          $D^\ell_{m,1}$, matching the representation of vector fields described in Example~2.1.
\end{itemize}
Higher-spin fields (e.g.\ symmetric tensors of type $(\ell,0)$) similarly correspond to $n=\ell$ equivariance, recovering the general framework from Appendix~\ref{app:Equivariant}.

\section{Spherical datasets and coordinate systems}\label{app:datasets}

\subsection{ERA5-lite dataset}\label{our_dataset}
The data used during training, validation, and testing is a subset of the ERA5 hourly dataset \cite{Hersbach2020Jul} concerning temperature at 2m (2m temperature) and wind speed and direction at 100m (100m u-component of wind, 100m v-component of wind). For training and model selection we have extracted a subset of 52 datapoints per year for both wind and temperature, corresponding to temporally equally distanced weekly measurements. Our choice to consider such a coarse dataset is due to the excessively long training time on the full dataset. Years from 2000 to 2009 (included) have been used for training, while the years 2020 and 2021 have been used for validation and model selection. When performing model selection the validation dataset in the canonical rotation was used if the training was performed on non-rotated data, while a fixed rotated validation dataset was used if training was performed on rotated copies of the training data.

For testing, we have considered a subset of 365 datapoints for each 2022 and 2023, corresponding to daily measurements at 12:00 noon.

Data augmentation, if needed, is performed on-the-fly in training, and in a reproducible way through a cached list of rotations in validation and test.

\subsection{Spherical MNIST}\label{app:spherical-mnist}
The Spherical MNIST dataset is constructed by projecting standard MNIST digit images onto the 2-sphere $S^2$ via a stereographic projection from the north pole onto the southern hemisphere, following \cite{cohen2018spherical}. The stereographic projection from the north pole $(0,0,1)$ maps a point $(x,y,z)\in S^2$ to the equatorial plane $\{z=0\}$ by projecting the ray from the north pole through $(x,y,z)$, i.e., $(\tilde{u},\tilde{v}) = \bigl(\tfrac{x}{1-z},\, \tfrac{y}{1-z}\bigr)$. The southern hemisphere ($z<0$) corresponds exactly to the open unit disk $\tilde{u}^2+\tilde{v}^2 < 1$.

\paragraph{Scalar fields.}
Let $f_{\text{MNIST}}: [0,28] \times [0,28] \to [0,1]$ be a grayscale MNIST image with pixel coordinates $(u,v)$. Normalizing to $[-1,1]^2$ via $\tilde{u} = \tfrac{2u}{28} - 1$, $\tilde{v} = \tfrac{2v}{28} - 1$, the inverse stereographic projection maps
\[
p(\tilde{u}, \tilde{v}) = \frac{1}{1 + \tilde{u}^2 + \tilde{v}^2} \begin{pmatrix} 2\tilde{u} \\ 2\tilde{v} \\ \tilde{u}^2+\tilde{v}^2-1 \end{pmatrix} \in S^2,
\]
with forward map $\tilde{u} = x/({1-z})$, $\tilde{v} = y/({1-z})$ for $z < 1$. The scalar field on $S^2$ is then
\[
f(p) = \begin{cases}
f_{\text{MNIST}}\!\left(14\!\left(\tfrac{x}{1-z} + 1\right),\, 14\!\left(\tfrac{y}{1-z} + 1\right)\right) & z < 0, \\
0 & z \geq 0,
\end{cases}
\]
using bilinear interpolation for non-integer coordinates. In spherical coordinates $p(\alpha,\beta) = (\sin\beta\cos\alpha, \sin\beta\sin\alpha, \cos\beta)$, substituting $x/(1-z) = \cot(\beta/2)\cos\alpha$ and $y/(1-z) = \cot(\beta/2)\sin\alpha$, this becomes
\[
f(p(\alpha, \beta)) = \begin{cases}
f_{\text{MNIST}}\!\left(14\!\left(\cot(\beta/2)\cos\alpha + 1\right),\, 14\!\left(\cot(\beta/2)\sin\alpha + 1\right)\right) & \beta > \pi/2, \\
0 & \beta \leq \pi/2.
\end{cases}
\]

\paragraph{Vector fields.}
The gradient of the planar image is approximated via Sobel filters\footnote{Sobel filters provide a more accurate discrete approximation than simple centered finite differences.}. The geometrically exact approach would project this planar gradient onto the tangent space of $S^2$ via the pushforward of the stereographic map. In practice, however, we use the simplified encoding
\[
\bxi_{\text{enc}}(\tilde{u}, \tilde{v}) = -\frac{\partial f}{\partial v}(\tilde{u}, \tilde{v}) + i\frac{\partial f}{\partial u}(\tilde{u}, \tilde{v}),
\]
which encodes the planar gradient directly as a complex scalar and stitches it onto the sphere via the same stereographic projection. While not geometrically exact, this encoding preserves the directional and magnitude information of the gradient and performs well empirically.

\subsection{Vector fields on \texorpdfstring{$S^2$}{S2}}

Let $\xi$ be a vector field on $S^2$. As established in Example~\ref{proof:equiv_vectors}, the associated function $\bxi: \SO(3) \to \mathbb{C}$ is defined by
\[
\bxi(A) = \langle A_1, \xi(A_3) \rangle + i\langle A_2, \xi(A_3) \rangle,
\]
satisfying the spin-1 equivariance condition $\bxi(A \cdot Z(\theta)) = e^{-i\theta}\bxi(A)$.
Below we describe how $\xi$ is constructed for each of the two datasets considered in this work.

\paragraph{ERA5 wind.}
Approximate the Earth surface by $S^2$, and let $\xi$ represent wind direction and velocity. Our data provides real functions $U(\alpha,\beta)$ and $V(\alpha, \beta)$ for the signed strength of the wind in the eastward and northward directions respectively, at point $p(\alpha,\beta)\in S^2$ with $\beta \neq 0,\pi$. Here $\alpha$ is the longitude coordinate starting at the Greenwich meridian, and $\beta$ is the colatitude starting at the north pole. The northward unit vector (away from the poles) is
\[\vec{n}(p) = \frac{1}{\sqrt{1 - \langle p, e_z \rangle^2 }} \left( e_z - \langle e_z, p \rangle p \right),\]
which yields the eastward vector $\vec{e}(p) = \vec{n}(p) \times p$. The vector field is then
\[\xi(p(\alpha,\beta)) = V(\alpha,\beta) \vec{n}(p(\alpha,\beta)) + U(\alpha,\beta) \vec{e}(p(\alpha,\beta)).\]
Using
$$\langle A_j, \vec{n}(A_3) \rangle = \frac{1}{\sqrt{1-A_{33}^2}} A_{3j}, \qquad
    \langle A_j, \vec{n}(A_3) \times A_3 \rangle = \langle A_3 \times A_j, \vec{n}(A_3) \rangle, \qquad j =1,2,$$
the associated function evaluates to
\begin{align*}
    \bxi(A) & = \frac{1}{\sqrt{1-A_{33}^2}} \left(
    U(\alpha, \beta) (A_{32}-iA_{31})+ V(\alpha,\beta) (A_{31}+iA_{32}) \right) = i(U(\alpha, \beta) + iV(\alpha,\beta)) e^{-i\gamma},
\end{align*}
for the Euler angles decomposition $A = Z(\alpha) Y(\beta) Z(\gamma)$.

\paragraph{Spherical MNIST gradients.}
For Spherical MNIST, the associated function on $\SO(3)$ is obtained by composing the simplified encoding $\bxi_{\text{enc}}$ defined in Section~\ref{app:spherical-mnist} with the stereographic projection.

\subsection{Change of coordinate system and data augmentation through rotations}

For both scalar and vector fields, data augmentation is performed by applying random elements $B \in \SO(3)$ through the left-action. For a scalar field $f: S^2 \to \mathbb{R}$, the rotated field is:
\[
f_B(p) = f(B^{-1}p), \quad p \in S^2.
\]

For a vector field $\xi: S^2 \to TS^2$, the rotated field accounts for both the rotation of the base point and the induced action on tangent vectors:
\[
\xi_B(p) = B_* \xi(B^{-1}p), \quad p \in S^2,
\]
where $B_*$ denotes the pushforward (differential) of the map $p \mapsto Bp$.

In terms of the associated functions on $\SO(3)$, data augmentation corresponds to the left-action:
\[
\mathbf{f}_B(A) = \mathbf{f}(B^{-1}A), \quad \bxi_B(A) = \bxi(B^{-1}A), \quad A \in \SO(3),
\]
which are precisely the left-action $\ell_B$ on $\calX_0$ and $\calX_1$ respectively, as introduced in Section~\ref{sec:signal-model-equiv}.

\paragraph{Spectral domain coordinate changes.}

When performing a change of coordinates (i.e., selecting a different orthonormal frame for $S^2$), spatial-domain operations incur interpolation errors due to the non-uniform density of the latitude-longitude sampling grid. These errors accumulate, especially in tasks involving multiple rotation applications. To minimize such interpolation artifacts, coordinate transformations can be performed directly in the spectral domain.

If $x \in \calX$ with Fourier coefficients $\scrF(x) = (\hat x_{m,n}^l)_I$, then the left-action $\ell_B$ is implemented spectrally as:
\[
\scrF(\ell_B x) = \left( \sum_{s=-l}^l \hat x_{s,n}^l\, b_{s,m}^l(B) \right)_I,
\]
where $b_{m,n}^l(B) = D_{-m,-n}^l(B^{-1})$ are Wigner D-matrix coefficients of $B^{-1}$.

For the special case $B = Y(\beta)$ (rotation around the $y$-axis by angle $\beta$), the transformation simplifies to:
\[
b_{m,n}^l(Y(\beta)) = d_{-m,-n}^l(-\beta),
\]
where $d^l$ denotes the Wigner $d$-matrices. This permits reuse of the precomputed $d$-matrix coefficients that are already computed during the FFT algorithm, thus avoiding unnecessary recomputation and maintaining numerical consistency between spectral transforms and rotations.

By performing rotations spectrally, we avoid repeated spatial interpolation and maintain exact equivariance (up to numerical precision) throughout data augmentation pipelines.

\section{Equivariance-Preserving Normalization for Complex Signals on \texorpdfstring{$\SO(3)$}{SO(3)}} \label{app:normalization}

Normalization layers are essential for stabilizing optimization in deep networks.
In an $\SO(3)$-equivariant architecture, however, normalization must preserve both
(i) model-equivariance under left-action, and
(ii) signal-equivariance under right-action.
The purpose of this appendix is to provide a clean derivation of the
normalization rules for complex-valued, spin-weighted signals on $\SO(3)$.

Throughout, a signal is a function $x : \SO(3) \to \mathbb{C}$ with Wigner-$D$ expansion
\[
x(A)=\sum_{l=0}^{\infty}\sum_{m,n=-l}^l \widehat{x}^{\,l}_{m,n}\,D^l_{m,n}(A).
\]

\paragraph{Equivariance constraints on normalization.} A normalization operator $\mathcal{N}$ is \emph{domain-equivariant} and \emph{spin-preserving} (with spin $n$) if
\[
\mathcal{N}(\ell_{Z(\theta)}x)
=
\ell_{Z(\theta)}\mathcal{N}(x),
\qquad
r_{Z(\theta)}\mathcal{N}(x)=e^{in\theta} \mathcal{N}(x).
\]
Thus $\mathcal{N}$ must preserve the spin of $x$ and it may only depend on statistics that are invariant under domain transformations (left-action of $\SO(3)$).

\paragraph{Invariant statistics in the Fourier domain.}

Because Haar measure is bi-invariant, spatial averages may be expressed in terms of
Wigner coefficients:

\begin{equation}
E[x] = \widehat{x}^{\,0}_{0,0},
\label{eq:F_mean}
\end{equation}

\begin{equation}
E[|x|^2]
=
\sum_{l=0}^\infty \frac{1}{2l+1}
\sum_{m,n=-l}^l \bigl|\widehat{x}^{\,l}_{m,n}\bigr|^2,
\label{eq:F_power}
\end{equation}

\begin{equation}
E[x^2]
=
\sum_{l=0}^\infty \frac{1}{2l+1}
\sum_{m,n=-l}^l
\widehat{x}^{\,l}_{m,n}\,
\widehat{x}^{\,l}_{-m,-n}.
\label{eq:F_second}
\end{equation}

If $x$ has spin $n\neq 0$, then it follows that
\[
E[x]=0, 
\qquad
E[x^2]=0.
\]

\paragraph{Spin-0 normalization.}

For spin $0$, the right-action is trivial, and both real and imaginary parts are
equivariant scalar fields.  Mean subtraction does not break equivariance, so that
\[
\widetilde{x} = x - E[x].
\]

The real $2\times 2$ covariance matrix of $\mathrm{Re}(x)$ and $\mathrm{Im}(x)$ is
\[
\Sigma
=
\frac12
\begin{pmatrix}
E[|x|^2] - |E[x]|^2 + \Re(E[x^2]-E[x]^2)
&
\Im(E[x^2]-E[x]^2)
\\[4pt]
\Im(E[x^2]-E[x]^2)
&
E[|x|^2] - |E[x]|^2 - \Re(E[x^2]-E[x]^2)
\end{pmatrix}.
\]

In principle, one may apply the linear transformation $\Sigma^{-1/2}$ to $\widetilde{x}$ so that its real and imaginary components become uncorrelated and have equal variance. However, this operation is computationally costly and offers no practical advantage, so we use a simpler normalization

\begin{equation}
\mathcal{N}_0(x)
=
\gamma\,
\frac{x - E[x]}{\sqrt{E[|x|^2]-|E[x]|^2} + \varepsilon},
\label{eq:F_norm0}
\end{equation}
with a learned real gain $\gamma$.

\paragraph{Spin-$n\neq 0$ normalization.}

If $n\neq 0$, the statistics $E[x]$ and $E[x^2]$ vanish automatically,
and the covariance matrix of $(\mathrm{Re}(x),\mathrm{Im}(x))$ is
\[
\mathrm{Cov}(x)
=
\frac12\,E[|x|^2]\;I_{2\times 2}.
\]
Thus the an admissible normalization is
\begin{equation}
\mathcal{N}_n(x)
=
\gamma\,
\frac{x}{\sqrt{E[|x|^2]} + \varepsilon},
\qquad n\neq 0,
\label{eq:F_normn}
\end{equation}
with a learned real gain $\gamma$.  
Additive biases are forbidden, as they would violate the $n$-equivariant symmetry after normalization.

\paragraph{Mixed-spin feature sets.}

If a layer contains several spins, normalization is applied \emph{per spin block}:
spin-$0$ channels use \eqref{eq:F_norm0}, and spin-$n\neq0$ channels use
\eqref{eq:F_normn}.  This preserves the block-diagonal spin structure under
right-action.

\section{Modules of the Neural Network in detail}\label{app:modules}

This appendix provides implementation-level details for all modules used in our SO(3)-equivariant architecture. We describe the design of the core building blocks in a manner consistent with the theoretical framework of the main paper. As spectral transforms are already introduced in detail in Appendix~\ref{app:Fourier}, they will not be repeated here.

\subsection{Data Representation and Tensor Layout}

The computational modules of our architecture operate on signals defined either on the sphere $S^2$ or on the rotation group $\mathrm{SO}(3)$, in both spatial and spectral form. This section summarizes the tensor formats used throughout the implementation.

\paragraph{Spatial domain formats.}
A batch of complex-valued signals on $S^2$ is represented as a tensor
\[
x \in \mathbb{C}^{B \times C \times n_\alpha \times n_\beta},
\]
where $B$ is the batch size, $C$ the number of feature channels, and $(n_\alpha,n_\beta)$ are the sampling resolutions in longitude and latitude. Signals on $\mathrm{SO}(3)$ are stored similarly as
\[
x \in \mathbb{C}^{B \times C \times n_\alpha \times n_\beta \times n_\gamma},
\]
with the last axis corresponding to the third Euler angle~$\gamma$.

\paragraph{Spectral domain formats on $S^2$.}
Band-limited spherical signals use the Fourier expansion in terms of Wigner $D$-matrix columns indexed by degrees $\ell = 0,\dots,L$ and orders $m=-\ell,\dots,\ell$. We store these coefficients in a flattened tensor
\[
\hat{x} \in \mathbb{C}^{B \times C \times (L+1)^2},
\]
where each degree $\ell$ occupies a contiguous block of size $2\ell+1$. For spin‑0 channels all degrees $\ell\ge 0$ contribute; for spin‑1 channels we use the same
flattened layout but coefficients with $\ell=0$ are structurally zero. This choice yields a uniform memory format across spins while maintaining the correct representation structure.

When one wants a single visual layout over all degrees, the coefficients can also be arranged in a centered triangular tableau (with zero-padding outside valid entries):
\begin{equation*}
\left[
\begin{array}{cccccc}
            &               &               &   & \\
            &               & f^2_{-3,0}    &   & \\
            &               & f^2_{-2,0}    &   & \\
            & f^1_{-1,0}    & f^2_{-1,0}    &   & \\
f^0_{0,0}   &  f^1_{0,0}    & f^2_{0,0}     &   & \dots\\
            & f^1_{1,0}     & f^2_{1,0}     &   & \\
            &               & f^2_{2,0}     &   & \\
            &               & f^2_{3,0}     &   & \\
            &               &               &   & \\         
\end{array}
\right].
\end{equation*}

This choice sacrifices memory contiguity across degrees at the benefit of a more intuitive visual layout and a computationally more efficient way to apply convolutions.

\paragraph{Spectral domain formats on $\mathrm{SO}(3)$.}
General (not necessarily spin-restricted) signals on $\mathrm{SO}(3)$ are represented by the full set of Wigner $D^\ell$-coefficients,
\[
\hat x = \big[\hat x^0,\hat x^1,\ldots,\hat x^L\big], \qquad
\hat{x}^\ell \in \mathbb{C}^{B \times C \times (2\ell+1) \times (2\ell+1)},
\]
where the last two indices correspond to the $(m,n)$ entries of the matrix $D^\ell_{m,n}$.
This block‑structured representation is the natural domain for group convolution and for the smoothing operator $\scrS_n$, which acts by selecting the $n$-th column of each $\hat{x}^\ell$.

\paragraph{Mixed-spin features.}
When a layer contains both spin-0 and spin-1 channels, the feature dimension is split into two consecutive blocks. This convention must be compatible throughout the entire pipeline (FFT, convolution, smoothing, normalization). For the $S^2$ spectral encoding, recall that spin-1 coefficients start at $l=1$, so the $l=0$ entry of a spin-1 channel is structurally zero. To keep uniform tensor shapes across spins, we pad the spin-1 encoding with a dummy zero at the beginning.

\subsection{Domain conversions}

\paragraph{Smoothing and domain lifting.}
The smoothing (projection) operator $\scrS_n:\calX\to\calX_n$ and the inclusion (lifting) map $\mathcal{I}_n:\calX_n\to\calX$ admit both spatial and spectral implementations with distinct trade-offs.

\emph{Spatial smoothing.} Given a signal on $\SO(3)$ in spatial form $x(\alpha,\beta,\gamma)$, the projection onto $\calX_n$ is computed by integrating out $\gamma$ against $e^{in\gamma}$:
\[
(\scrS_n x)(\alpha,\beta)
= \frac{1}{2\pi}\int_0^{2\pi} x(\alpha,\beta,\gamma)\,e^{in\gamma}\,d\gamma
\approx \frac{1}{n_\gamma}\sum_{k=0}^{n_\gamma-1} x(\alpha,\beta,\gamma_k)\,e^{in\gamma_k}.
\]
This is a single inner product (or batched dot product) along the $\gamma$-axis with a precomputed phase vector. For $n=0$, this reduces to a simple average over $\gamma$. This formulation is useful in layers where the nonlinearity has just been applied in spatial form: the smoothing can be performed immediately, producing an $S^2$ signal without an intervening FFT.

\emph{Spectral smoothing.} Given the full $\SO(3)$ spectral representation $\hat x^l \in \mathbb{C}^{(2l+1)\times(2l+1)}$, the smoothing operator simply extracts the $n$-th column from each degree:
\[
\scrS_n: \hat x^l_{m,n'} \mapsto \hat x^l_{m,n}\,\delta_{n',n}.
\]
In practice, this is an indexing operation that selects a single column from each block matrix.

\emph{Spectral lifting.} The reverse operation, embedding an $n$-equivariant $S^2$ spectrum into a general $\SO(3)$ spectrum, pads each degree-$l$ vector of length $(2l+1)$ into a $(2l+1)\times(2l+1)$ matrix with all entries zero except the $n$-th column. This is the spectral counterpart of the spatial inclusion $(\mathcal{I}_n u)(\alpha,\beta,\gamma)=u(\alpha,\beta)\,e^{-in\gamma}$.

\emph{Spectral padding} Given an $n$-equivariant signal $x \in X_n$ defined on the sphere $S^2$, we lift it to a general signal on $SO(3)$ by its explicit dependence on the third Euler angle $\gamma$
\[
    (I_n x)(\alpha,\beta,\gamma)
    = x(\alpha,\beta)\, e^{-in\gamma},
\]
which embeds the spin-$n$ field into the full space of functions on $SO(3)$ while preserving the required right-action equivariance.

\subsection{Convolution modules}

\paragraph{Full $\SO(3)$ convolution.}
We have inputs in $\hat x\in\mathbb{C}^{B\times C_{\mathrm{in}}\times (L+1)^2}$ and output in $\hat y^l\in\mathbb{C}^{B\times C_{\mathrm{out}}\times(2l+1)\times(2l+1)}$.
For each degree $l$,
\begin{equation*}
\hat y^l_{o,m,n}=\sum_{i=1}^{C_{\mathrm{in}}}\sum_{s=-l}^{l} \kappa^l_{i,o,s}\,\hat x_{i,m,s}^l
\end{equation*}
with each order using independent trainable spectral weights.

\paragraph{Spin-restricted convolution.}
Input and output are flattened $S^2$-style spectra. The kernel depends only on degree $l$ and channel pair,
\begin{equation*}
\hat y_{o,m}^l=\sum_{i=1}^{C_{\mathrm{in}}} w^l_{i,o}\,\hat x_{i,m}^l,
\end{equation*}
which preserves the right-equivariant subspace directly.

In both modules, initialization scales order-$l$ weights by $\big((2l+1)C_{\mathrm{out}}\big)^{-1}$ to keep activations numerically stable across degrees.

\subsection{Nonlinearities}

\paragraph{Affine complex pointwise nonlinearities.}
It is possible to extend most real pointwise nonlinearities by taking
\begin{equation*}
z \mapsto az+b,
\end{equation*}
with learnable complex scalars $a,b$, followed by the real nonlinearity on $\Re(z)$ and $\Im(z)$ component-wise.

\paragraph{Magnitude nonlinearity.}
For spin-1 channels, equivariance is preserved by acting on the modulus:
\begin{equation*}
\sigma(z)=\frac{\operatorname{ReLU}(|z|+b)}{|z|+\varepsilon}\,z,
\end{equation*}
with learned real bias $b$ and small $\varepsilon>0$. Optional spin-0 ReLU is applied independently to spin-0 channels.

This map preserves phase and only gates amplitude. Consequently, it is often less expressive than unconstrained complex nonlinearities, especially when tasks require nonlinear phase interactions.

\subsection{Dropout}\label{subsec:dropout-impl}

Dropout for complex-valued signals is implemented by reinterpreting the complex tensor as a real tensor with an extra trailing dimension of size two (real and imaginary parts), applying a standard real-valued dropout mask to this view, and then recombining into a complex tensor. This can be applied in either spectral or spatial domain. In our architecture, dropout is applied in the \emph{spatial} domain at training time only.

A single realization of a random dropout mask does not commute with rotations. However, the dropout operator is equivariant \emph{in expectation}: averaging over all mask realizations recovers a uniform scaling of the signal, which commutes with all rotations. Since dropout is disabled at test time, the trained model is exactly equivariant at inference (up to numerical errors).

\subsection{Normalization}\label{subsec:normalization-impl}

Normalization stabilizes training but must be designed so as not to break the spin structure of intermediate representations. Because complex-valued $n$-equivariant signals with $n\neq 0$ have zero mean by symmetry, additive centering is admissible only for spin-0 channels. Spin-$n\neq 0$ channels can only be rescaled by invariant second-order statistics. In all cases, normalization is applied \emph{per spin block}. A complete derivation of the equivariance constraints and the resulting formulas is provided in Appendix~\ref{app:normalization}; here we focus on the two implementation variants and their practical differences.

\paragraph{Spatial normalization.}
When the signal is in spatial form on the $(\alpha,\beta)$ or $(\alpha,\beta,\gamma)$ grid, the mean and variance must be computed using area-weighted averages to account for the non-uniform grid density of the latitude-longitude parametrization. Concretely, every spatial average $\mathbb{E}[\cdot]$ is replaced by
\[
\mathbb{E}_{\mathrm{sph}}[x]
= \frac{\sum_{j} \sin(\beta_j)\, x(\cdot,\beta_j,\cdot)}
       {\sum_{j} \sin(\beta_j)},
\]
where the $\sin\beta_j$ weights correct for the latitude-dependent cell area. The weights are precomputed at initialization and broadcast along all other dimensions (batch, channel, longitude, and, for $\SO(3)$ signals, the $\gamma$-axis). After computing the weighted mean and variance:
\begin{itemize}
  \item For spin-0 channels: subtract the mean, divide by the standard deviation, then apply a learned complex scale $\gamma$ and a learned complex bias $\beta$.
  \item For spin-$n\neq 0$ channels: divide by the standard deviation only (no mean subtraction, no additive bias), then apply a learned complex scale (no bias).
\end{itemize}

\paragraph{Spectral normalization.}
When the signal is already in spectral form, normalization can be performed directly on the Fourier coefficients without transforming back to spatial domain. The key identities (derived in Appendix~\ref{app:normalization}) are:
\begin{itemize}
  \item The spatial mean equals the $l=0$ coefficient: $\mathbb{E}[x] = \hat x^0_{0,0}$.
  \item The spatial power (second moment) is given by $\mathbb{E}[|x|^2] = \sum_l \frac{1}{2l+1}\sum_{m,n}|\hat x^l_{m,n}|^2$.
\end{itemize}
The normalization then proceeds as in the spatial case: mean subtraction for spin-0 (which in practice means setting the $l=0$ coefficient to zero) and variance normalization.

\emph{Batch normalization} averages statistics over the batch dimension (and, for spectral data, over the $(m,n)$ indices within each degree block). \emph{Layer normalization} averages over the feature (channel) dimension instead, making it independent of batch size. In our experiments we opt to use layer normalization, since the small batch sizes imposed by memory constraints of the equivariant models lead to noisy batch statistics, whereas layer normalization normalizes each sample independently.

\subsection{Precomputation of Wigner $d$-matrices.}
All transforms and change of coordinates rely on precomputed Wigner $d$-matrices evaluated at the quadrature latitudes $\beta_j$. For $S^2$ transforms, only \emph{sliced} matrices are needed: a list indexed by $l$ containing the column vector $d^l_{\bullet,n}(\beta_j)\in\mathbb{R}^{(2l+1)\times n_\beta}$ for each target spin $n$. For full $\SO(3)$ transforms, the complete matrices $d^l(\beta_j)\in\mathbb{R}^{(2l+1)\times(2l+1)\times n_\beta}$ are stored, again indexed by $l$. Depending on the bandlimit $L$ and the grid resolution, these tables can be large but need only be computed once (at initialization or loaded from disk) and then reused by all layers operating at the same resolution.

It is worth noting that the same precomputed $d$-matrices can also be reused for spectral-domain rotations (data augmentation), since rotating by $B=Z(\alpha)Y(\beta)Z(\gamma)$ amounts to multiplying each spectral block $\hat x^l$ by the corresponding Wigner $D^l$-matrix, which itself factorizes through the small $d^l(\beta)$ and diagonal phase matrices for $\alpha$ and $\gamma$.

\section{Experimental details}\label{app:exp_details}

All models were trained using the Adam optimizer~\cite{kingma2014adam} with a fixed learning rate of $10^{-3}$ and default hyperparameters. During training, dropout was applied in the spatial domain after each activation function with probability $p=0.2$. For regression tasks, the training loss was a sine-weighted MSE on $S^2$, accounting for the non-uniform area element of the latitude-longitude grid; for classification tasks, the standard cross-entropy loss was used. Early stopping was applied, triggered by no improvement in the validation metric for 10 consecutive epochs (equivariant models) or 150 consecutive epochs (CNN baselines), with the best-validation checkpoint used for testing. The equivariant models used a batch size of 1 for ERA5 and 32 for MNIST; CNN baselines used batch sizes of 128 for ERA5 and 512 for MNIST.

Data augmentation during training, when enabled, was performed on-the-fly. In the ``RB'' setting, augmentation consists of rotations of the form $Y(\beta)\in\SO(3)$ about the $Y$-axis, with the angle $\beta$ sampled uniformly in $[0,2\pi)$. In the ``RF'' setting, a general element of $\SO(3)$ is drawn uniformly with respect to the Haar measure on $\SO(3)$ which does \emph{not} correspond to sampling each Euler angle uniformly. Further details on the augmentation procedure are given in Appendix~\ref{app:datasets}. During validation and testing, rotations were fixed in advance and loaded from a saved list to ensure reproducibility.

\end{document}